\newtheorem{theorem}{Theorem}
\newtheorem{lemma}{Lemma}
\newtheorem{definition}{Definition}
\newtheorem{problem}{Problem}
\newtheorem{corollary}{Corollary}
\newtheorem{proposition}{Proposition}
\newtheorem{remark}{Remark}
\algnewcommand{\LineComment}[1]{\State \(\triangleright\) #1}
\def\BState{\State\hskip-\ALG@thistlm}
\tikzset{diamond state/.style={draw,diamond}}
\DeclareMathOperator*{\argmin}{arg\,min}
\newcommand{\project}[2]{\operatorname{Pr}_{#1}(#2)}
\newcommand{\projectbig}[2]{\operatorname{Pr}_{#1}\left(#2\right)}
\newcommand{\proj}[1]{\operatorname{Pr}_{#1}}
\title{\LARGE 
Distributed and Proximity-Constrained \\C-Means for Discrete Coverage Control
}
\author{Gabriele~Oliva$^{1,*}$,~Andrea~Gasparri$^2$, Adriano~Fagiolini$^3$, and Christoforos N. Hadjicostis$^4$ 
\thanks{$^1$ Unit of Automatic Control, Department of Engineering, Universit\`a Campus Bio-Medico di Roma, via \'Alvaro del Portillo 21, 00128, Rome, Italy.\newline
$^2$ University  Roma Tre, Department of Engineering, Via della Vasca Navale 79, 00146, Rome, Italy. \newline
$^3$ Dipartimento di Energia, Ingegneria dell'Informazione e Modelli Matematici (DEIM), University of Palermo, Viale delle Scienze, Edificio 10, 90128, Palermo, Italy.\newline
$^4$ Department of Electrical and Computer Engineering, University of Cyprus, 75 Kallipoleos Avenue, P.O. Box 20537, 1678 Nicosia, Cyprus.\newline
 $^*$ Corresponding author, email: g.oliva@unicampus.it}\\
}
\begin{document}

\maketitle
\thispagestyle{empty}
\pagestyle{empty}

\begin{abstract}
In this paper we present a novel distributed coverage control framework for a network of mobile agents, in charge of covering a finite set of \emph{ points of interest} (PoI), such as people in danger, geographically dispersed equipment or environmental landmarks. 
The proposed algorithm is inspired by $C$-Means, an unsupervised learning algorithm originally proposed for non-exclusive clustering and for identification of cluster centroids  from a set of observations. To cope with the agents' limited sensing range and avoid infeasible coverage solutions, traditional $C$-Means needs to be enhanced with {\em proximity constraints}, ensuring  that each agent takes into account only neighboring PoIs.
The proposed coverage control framework provides useful information concerning the ranking or importance of the different PoIs to the agents, which can be exploited in further application-dependent data fusion processes, patrolling, or disaster relief applications.

\end{abstract}

\section{Introduction}
In the literature, \emph{coverage control} has been gaining particular attention by the scientific community in recent years.
Typical approaches  follow the path of the seminal works in  \mbox{\cite{cortes2003geometric,cortes2005coordination},} where agents interact in order to achieve two main objectives: (1) to partition the area of interest into zones for which a single agent is responsible for the covering (e.g., Voronoi regions); (2) to reach locations that minimize a function of the distance from each point in the region assigned to them (e.g., navigate to a weighted centroid of the assigned Voronoi region).
Over the years, the importance of taking into account agents' limited sensing and communication capabilities has become paramount \cite{laventall2009coverage,kantaros2016distributed}.
Recent innovations in the field include, among others, time varying methodologies \cite{lee2014multi}, dynamic approaches \cite{palacios2016distributed}, and adaptive management of agents' trust \cite{pierson2016adaptive}.
 Note that most of the existing literature focuses on continuous space coverage, while in several practical situations it might be required to cover just a discrete set of \emph{ points of interest} (PoI). In fact, a discrete space approach has lower computational demands with respect to continuous space ones and allows the modeling of specific targets of interest.
To the best of our knowledge, work at the state of the art focused on the  coverage of discrete points of interest is quite sparse. In a recent work \cite{Jiang:2017}, the authors use $k$-means to solve a discrete coverage problem where multiple agents are in charge of a set of PoIs.
A related work, even though not specifically focused on coverage, is \cite{montijano2014efficient} where roles are assigned to the agents via constrained discrete optimization.  In both cases, however, the agents are not constrained to have a limited sensing radius.

In this paper, we consider a distributed problem where there is a finite and discrete set of $n$ PoIs in fixed locations in space, and a set of $r<n$ mobile agents aims at achieving the following two objectives: (1) to partition the PoIs in $r$ groups such that PoIs in the same group are spatially close while PoIs in different groups are far apart; (2) to associate each agent to a group and assign a location to such agent in order to maximize the coverage.
Our distributed approach is inspired by the $C$-means \cite{Bezdek:1981}, a {\em non-exclusive} and unsupervised data clustering algorithm, where a set of points (e.g., the PoIs) are grouped so that each point can be associated, at the same time, to different clusters (i.e., to different agents) with different intensities. 
From a mathematical standpoint, we extend the classical \mbox{$C$-means} by introducing {\em proximity constraints}, which account for the limited sensing range of the agents.
As a matter of fact, results dealing with the distribution of clustering algorithms, including the C-means, can be found in the literature~\cite{Patel:2013,TMC:2014,cinesi,CDC2016Oliva}. Compared to these works, where the sensors collaborate in order to compute the centroids and their associations in a distributed fashion, in our work we reverse the perspective. That is, the agents play the role of the centroid and collaborate in order to compute their location and the association to the PoIs; the latter play the role of the sensors.

The outline of the paper is as follows: Section \ref{sec:prelim} provides some preliminary definitions, while Section \ref{sec:problemstatement} introduces the problem addressed in this paper; Sections \ref{sec:assignment} and \ref{sec:refinement} solve two sub-problems that are the basis for our algorithm, while Section \ref{sec:algorithm} provides the proposed algorithm along with simulation results; the final Section \ref{sec:conclusions} collects some conclusive remarks and future work directions.

\section{Preliminaries}
\label{sec:prelim}

In the following, we consider the Euclidean space $\mathbb{R}^d$, equipped with the usual norm. Let us define the {\em metric projection} onto a convex set as follows:
\begin{definition}[Metric Projection]
Let $X\subset \mathbb{R}^d$ be a convex set. The metric projection onto $X$ is a function  $\proj{X}:\mathbb{R}^d\rightarrow X$ such that
$
\project{X}{{\bf v}} = \argmin_{ {\bf z} \in X} \left \| {\bf v} - {\bf z} \right \|.
$
\end{definition}
In the following, we refer to a metric projection simply as projection.
To serve the scope of this paper, we now specialize the projection onto a closed ball.


\begin{definition}[Closed Ball]
A closed ball $\mathbb{B}_{i,\rho}\subset\mathbb{R}^d$ with radius $\rho$ and center ${\bf p}_i\in \mathbb{R}^d$ is given by
\mbox{$
\mathbb{B}_{i,\rho} =\{{\bf z}\in \mathbb{R}^d,\,:\, \|{\bf p}_i-{\bf z}\|\leq \rho\}
$}.
\end{definition}


\begin{definition}[Projection onto a Closed Ball]
Let \mbox{$\mathbb{B}_{i,\rho}\subset \mathbb{R}^d$} be the closed ball of radius $\rho>0$ centered at ${\bf p}_i\in \mathbb{R}^d$. The projection onto $\mathbb{B}_{i,\rho}$ is the function
\mbox{$
\project{\mathbb{B}_{i,\rho}}{{\bf v}} =  \alpha {\bf v} + (1-\alpha) {\bf p}_i
$},
where $\alpha=\rho/\| {\bf v}-{\bf p}_i\|$ if $\| {\bf v}-{\bf p}_i\|>\rho$ and $\alpha=1$ otherwise.
%
%
\end{definition}


We now review the basics of convex constrained optimization. The reader is referred to \cite{Zangwill} for a comprehensive overview of the topic.


\begin{definition}[Convex Constrained Optimization Problem]
A convex constrained optimization problem with $s$ constraints has the following structure:
\begin{equation}
\label{prob:convexproblem}
\begin{matrix}
\min_{{\bf x}\in \mathbb{R}^d} f({\bf x})\\[2pt]
\mbox{Subject to}\\
\begin{cases}
g_i({\bf x})\leq 0,&i=1,\ldots, q\\
h_i({\bf x})=0,&i=q+1,\ldots, s.
\end{cases}
\end{matrix}
\end{equation}
where $f$ and all $g_i$ and $h_i$ are convex functions. 
The set of feasible solutions for the above problem is given by
$$\Omega=\{{\bf x}\in \mathbb{R}^d\,|\,g_i({\bf x})\leq 0,\,\, 1\leq i\leq q;\, h_i({\bf x})= 0,\,\,q+1\leq i\leq s\},$$
which is a convex set since all $g_i$ and $h_i$ are convex.
\end{definition}


We now review conditions that are fundamental in order to solve convex constrained optimization problems.


\begin{definition}[Slater's Conditions]
The convex constrained optimization problem in Eq.~\eqref{prob:convexproblem} satisfies the {\em Slater's Condition} if there is an ${\bf x}\in \Omega$ such that $g_i({\bf x})<0$ for all $i=1, \ldots, q$ and $h_i({\bf x})=0$ for all $i=q+1, \ldots, s$.
%
Moreover, if there is an ${\bf x}\in \Omega$ that satisfies the Slater's Condition and is such that all nonlinear $g_i({\bf x})$ are negative, while all linear $g_i({\bf x})$ are non-positive, then the problem is said to satisfy the {\em restricted Slater's Condition}.
\end{definition}


We now review Karush-Kuhn-Tucker (KKT) Theorem (see \cite{Zangwill} and references therein).


\begin{theorem}[KKT Theorem]
\label{theoKKT}
Consider a convex constrained optimization problem as in Eq.~\eqref{prob:convexproblem} that satisfies the restricted Slater's Condition, and let  the {\em Lagrangian function} be the function
\mbox{$
f({\bf x},{\bm \lambda})=f({\bf x})+ \sum_{i=1}^q \lambda_i g_i({\bf x})+
 \sum_{i=q+1}^s \lambda_i h_i({\bf x}),
$}
where ${\bm \lambda}=[\lambda_1,\ldots, \lambda_s]^T$.
The point ${\bf x}^*\in \mathbb{R}^d$ is a {\em global minimizer} for the problem if and only if the following conditions hold:
1)~\mbox{$\partial f({\bf x},{\bm \lambda})/\partial {{\bf x}} |_{{\bf x}^*,{\bm \lambda}^*}=0$}; 
2)~$\lambda^*_i g_i({\bf x}^*)=0$, for all $i=1,\ldots,q$;
3)~$g_i({\bf x}^*)\leq 0,\, 1\leq i\leq q$ and $h_i({\bf x}^*)=0,\, q+1\leq i \leq s$;
4)~\mbox{$\lambda^*_{i}\geq 0$}, for all $i=1,\ldots,s$.
\end{theorem}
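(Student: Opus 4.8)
The statement is an ``if and only if,'' so the plan is to treat the two implications separately. The sufficiency direction (conditions 1)--4) imply global optimality) needs no constraint qualification and uses only convexity; the necessity direction is where the restricted Slater's Condition does the work, through a separating-hyperplane argument.

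For sufficiency, suppose $({\bf x}^*,{\bm \lambda}^*)$ satisfies 1)--4). First I would observe that ${\bf x}\mapsto f({\bf x},{\bm \lambda}^*)$ is convex: $f$ and each $g_i$ are convex with $\lambda_i^*\geq 0$ by 4), while the equality-constraint terms are affine (the $h_i$ must be understood as affine for $\Omega$ to be convex). By 1) its gradient vanishes at ${\bf x}^*$, so ${\bf x}^*$ globally minimizes $f(\cdot,{\bm \lambda}^*)$ over $\mathbb{R}^d$. Then for any feasible ${\bf x}\in\Omega$, $f({\bf x})\geq f({\bf x})+\sum_{i=1}^q\lambda_i^* g_i({\bf x})+\sum_{i=q+1}^s\lambda_i^* h_i({\bf x})=f({\bf x},{\bm \lambda}^*)\geq f({\bf x}^*,{\bm \lambda}^*)=f({\bf x}^*)$, where the first step uses $\lambda_i^*\geq 0$, $g_i({\bf x})\leq 0$ and $h_i({\bf x})=0$, and the last uses complementary slackness 2) together with $h_i({\bf x}^*)=0$ from 3). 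Hence ${\bf x}^*$ solves the problem.

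For necessity, suppose ${\bf x}^*$ is a global minimizer. The plan is a classical value-set separation. I would form the convex set
\[
\mathcal{A}=\bigl\{(u_1,\dots,u_q,w_{q+1},\dots,w_s,t):\ \exists\,{\bf x}\in\mathbb{R}^d,\ g_i({\bf x})\leq u_i,\ h_i({\bf x})=w_i,\ f({\bf x})-f({\bf x}^*)\leq t\bigr\},
\]
convexity following from convexity of $f$ and the $g_i$ and affineness of the $h_i$. Optimality of ${\bf x}^*$ puts the origin on the boundary of $\mathcal{A}$ but not in its interior, since a point of $\mathcal{A}$ with all of its first $s$ coordinates zero and a strictly negative last coordinate would contradict minimality. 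The supporting-hyperplane theorem then yields a nonzero $({\bm \mu},{\bm \nu},\tau)$ with $\sum_i\mu_i u_i+\sum_i\nu_i w_i+\tau t\geq 0$ on $\mathcal{A}$; testing the recession directions of $\mathcal{A}$ (increasing any $u_i$ or $t$) forces $\mu_i\geq 0$ and $\tau\geq 0$. The crucial step is to prove $\tau>0$: if $\tau=0$, evaluating the inequality at a point $\bar{\bf x}$ furnished by the restricted Slater's Condition (strictly feasible for the nonlinear $g_i$, the affine ones handled by the standard relative-interior argument) forces all $\mu_i$ and all $\nu_i$ to vanish, contradicting $({\bm \mu},{\bm \nu},\tau)\neq 0$. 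Dividing by $\tau$ and setting ${\bm \lambda}^*=({\bm \mu}/\tau,{\bm \nu}/\tau)$ gives $f({\bf x})-f({\bf x}^*)+\sum_i\lambda_i^* g_i({\bf x})+\sum_i\lambda_i^* h_i({\bf x})\geq 0$ for all ${\bf x}$; taking ${\bf x}={\bf x}^*$ gives $\sum_i\lambda_i^* g_i({\bf x}^*)\geq 0$, which together with $\lambda_i^*\geq 0$ and $g_i({\bf x}^*)\leq 0$ yields 2), and the displayed inequality then says ${\bf x}^*$ minimizes the convex function $f(\cdot,{\bm \lambda}^*)$, so its gradient vanishes there, which is 1); conditions 3) and 4) are immediate from feasibility and construction.

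I expect the main obstacle to be the necessity direction, and within it two delicate points: proving $\tau>0$, i.e.\ ruling out a ``vertical'' supporting hyperplane, which is precisely where the restricted Slater's Condition is needed and the only place a constraint qualification is genuinely used; and the bookkeeping that separates affine from nonlinear inequality constraints in that step, together with keeping the $h_i$ affine so that $\mathcal{A}$ remains convex and the $\nu_i$ stay sign-unrestricted. A minor further point is that passing from ``${\bf x}^*$ minimizes the convex Lagrangian'' to the stationarity equation in 1) relies on differentiability of $f$, $g_i$, $h_i$, implicit in the hypotheses. As an alternative I could instead invoke a general (nonconvex) first-order necessary-condition theorem under a constraint qualification and then note that, for convex data, such a stationary point is automatically a global minimizer by the sufficiency argument --- but I would keep the self-contained separation proof as the primary route.
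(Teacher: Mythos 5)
The paper offers no proof of this statement: Theorem~\ref{theoKKT} is quoted as a known result and simply referred to \cite{Zangwill}, so there is nothing internal to compare your argument against. Taken on its own, your proposal is the standard textbook proof and its outline is sound: sufficiency via convexity of the weighted Lagrangian ${\bf x}\mapsto f({\bf x},{\bm \lambda}^*)$ plus the chain $f({\bf x})\geq f({\bf x},{\bm \lambda}^*)\geq f({\bf x}^*,{\bm \lambda}^*)=f({\bf x}^*)$, and necessity via separation of the value set $\mathcal{A}$, with the restricted Slater's Condition used exactly once, to rule out a vertical supporting hyperplane ($\tau>0$). Two caveats. First, your necessity argument (correctly) yields sign-unrestricted multipliers $\nu_i$ for the equality constraints, whereas the theorem as printed demands $\lambda_i^*\geq 0$ for \emph{all} $i=1,\ldots,s$, including $i=q+1,\ldots,s$; that literal claim is not recoverable in general (e.g. $\min x$ subject to $x=0$ forces $\lambda^*=-1$), so what you prove is the standard KKT statement --- the discrepancy lies in the paper's loose paraphrase (which also calls the $h_i$ merely convex rather than affine, a point you rightly flag), not in your argument. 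Second, the step handling affine inequality constraints in the proof that $\tau>0$ (the relative-interior argument attached to the \emph{restricted} Slater's Condition) is only gestured at; that is the one place where genuine work remains if the proof is to be complete, the rest being routine.
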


\section{Problem Statement}
\label{sec:problemstatement}
Let us consider a set of $n$ PoIs and \mbox{$r< n$} agents deployed in a bounded region in $\mathbb{R}^d$, with \mbox{$d \in \{2, \, 3\}$}; we assume that each PoI $i$ is in a fixed location ${\bf p}_i\in\mathbb{R}^d$,  while the $j$-th agent is initially in location ${\bf x}_j(0)\in \mathbb{R}^d$.
The following set of assumptions is now in order: a)~ an agent $j$ is able to sense all PoIs for which the Euclidean distance from $j$ is less than or equal to~$\rho$; b)~an agent $j$ is able to communicate with all other agents for which the Euclidean distance from $j$ is less than or equal to a threshold $\theta\geq 2\rho$;
c)~the agents are deployed initially in a way such that each PoI is sensed by at least one agent, and each agent senses at least one PoI; d)~the set $\{{\bf p}_1,\ldots, {\bf p}_n\}$ contains at least $n>r$ distinct points.
Note that the last two assumptions, i.e.,  c) and d), are inherited from the standard C-means \cite{Bezdek:1981}.
In particular,  Assumption c) ensures that no PoI or agent is neglected by the optimization process.

Let ${\bf x}=[{\bf x}_1^T,\ldots, {\bf x}_r^T]^T\in \mathbb{R}^{rd}$ collect the location of all agents and let $U$ be an $n\times r$ matrix encoding the associations among the agents and the PoIs, having $u_{ij}$ at its $(i,j)-th$ entry. Moreover, let $\delta_{ij}=\|{\bf p}_i-{\bf x}_j\|$.  The problem can be stated as follows.


\begin{problem}
\label{prob:ourproblem}
Find \mbox{${\bf x}^*\in\mathbb{R}^{rd}$} and $U^*\in\mathbb{R}^{n\times r}$ such that
$$
\begin{matrix}
J({\bf x}^*,U^*)=\min_{{\bf x},U} \sum_{i=1}^n \sum_{j=1}^r u_{ij}^m\delta^2_{ij}\\
\mbox{Subject to}\\
\begin{cases}
\begin{matrix}
\sum_{j=1}^r u_{ij} =1,& \forall i& (I)\\
\sum_{i=1}^n u_{ij}>0,& \forall j& (II)\\
u_{ij}\in [0,1],& \forall i,j& (III)\\
u_{ij}(\delta_{ij}^2-\rho^2)\leq0,& \forall i,j& (IV)
\end{matrix}
\end{cases}
\end{matrix}
$$
where $m$ is a finite positive integer that is greater than one. 
\end{problem}


Problem~\ref{prob:ourproblem} extends the standard C-means problem \cite{Bezdek:1981} through the \emph{ proximity constraints}  (IV). 
Note that constraint (I) ensures the complete coverage of each PoI by the agents, whereas constraint (II) ensures that each agent covers at least one PoI. We point out that the larger the parameter  $m$ is, the less the associations $u_{ij}$ influence the objective function; as in the case of C-means, such a parameter accounts for the degree of overlapping among the different clusters\footnote{When there is no a priori knowledge, a popular choice is $m=2$ (see \cite{nayak2015fuzzy} and references therein).}. 

Note also that constraint (III)  requires \mbox{$u_{ij}\in[0,1]$}; as a consequence, the combination of constraint (III)  and the proximity constraints (IV) implies that $u_{ij}=0$ whenever $\delta_{ij}>\rho$. Therefore, these constraints ensure that an agent~$j$ does not cover those PoIs that are out-of-reach from it.

In the following, we develop a distributed and iterative algorithm to solve Problem \ref{prob:ourproblem}; specifically, at each iteration $k$, our algorithm alternates between an {\em assignment phase} where we find the optimal associations $u_{ij}(k)$ given the current locations ${\bf x}(k)$ and a {\em refinement phase} where we assume the associations $u_{ij}(k)$ are fixed and we find the optimal location ${\bf x}(k+1)$ for the agents\footnote{We  abstract away from the actual kinematics of the agents and we neglect the actual planning problem, assuming that a control loop to guide the agent toward the desired point exists.}.
The two steps are iterated until a stopping condition is met.

We next characterize (in Sections \ref{sec:assignment} and \ref{sec:refinement}) the optimality conditions for each sub-problem, respectively.
Future work will be focused on studying the optimality of the overall algorithm.

\section{Assignment Phase}
\label{sec:assignment}
In this section we discuss how, given fixed agents' locations, the optimal associations $U$ can be computed.

Specifically, the optimal choice for the $i$-th row of $U$ (i.e., the associations of the $i$-th PoI to all the agents) is as follows. If there is at least one agent $j$ such that ${\bf x}_j = {\bf p}_i$ (i.e., the $j$-th agent lies in the same location as the $i$-th PoI), the terms $u_{i1}, \ldots, u_{i,r}$  must satisfy\footnote{When two or more agents are at the same position of a poi ${\bf p}_j$, there is an infinity of choices for the associations $u_{ij}$ that satisfy Eq.~\eqref{calcolamembership2}.}
\begin{equation}
\label{calcolamembership2}
\sum_{j=1}^r u_{ij}(k)=1,\quad u_{ij}(k)=0 \mbox{ if } \delta_{ij}\neq 0;
\end{equation}
if, conversely, ${\bf x}_j \neq {\bf p}_i$ for all agents $j$ (i.e., no agent is at the same location as the $i$-th PoI), the terms $u_{ij}$ are as follows:
\begin{equation}
\label{calcolamembershipsparse}
u_{ij}=\begin{cases}\left({\underset{h|\delta_{ih}\leq\rho}{\sum}\Big ( \frac{\delta_{ij}}{\delta_{ih}}\Big ) ^{\frac{2}{m-1}}}\right)^{-1},& \mbox{ if } \delta_{ij}\leq\rho,\\
0,& \mbox{ otherwise}.
\end{cases}
\end{equation}

We now prove the optimality of such strategy.


\begin{theorem}
\label{prop:propositionassignment}
For given agents' locations ${\bf x}$ that satisfy Assumption c), a solution $\{{\bf x}, U\}$ where $U$ satisfies Equations \eqref{calcolamembership2} and \eqref{calcolamembershipsparse} is a  global minimizer for Problem~\ref{prob:ourproblem}.
\end{theorem}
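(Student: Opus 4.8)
The key observation is that, for fixed $\mathbf{x}$, the cost decouples across the rows of $U$: writing $J(\mathbf{x},U)=\sum_{i=1}^n J_i$ with $J_i=\sum_{j=1}^r u_{ij}^m\delta_{ij}^2$, the objective and the constraints $(I)$, $(III)$, $(IV)$ involve each row of $U$ separately, and only $(II)$ couples distinct rows. The plan is therefore to (i) minimize each $J_i$ over the $i$-th row subject to $(I)$, $(III)$, $(IV)$ alone; (ii) identify the minimizer with the prescription of Eqs.~\eqref{calcolamembership2}--\eqref{calcolamembershipsparse}; and (iii) observe that stacking these rows yields a $U$ whose cost equals the sum of the per-row minima --- hence a lower bound on $J$ over the (smaller) feasible set of Problem~\ref{prob:ourproblem} --- and which, thanks to Assumption c), also satisfies $(II)$; this makes $\{\mathbf{x},U\}$ a global minimizer. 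Note that, with $\mathbf{x}$ frozen, each per-row problem is a convex constrained optimization problem in the sense of Eq.~\eqref{prob:convexproblem}: the objective is convex in the row since $t\mapsto t^m$ is (strictly) convex on $[0,\infty)$ for $m>1$ and $\delta_{ij}^2\ge 0$, and all the constraints are linear in $U$.

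Within row $i$ I would first eliminate the agents that cannot sense PoI $i$: if $\delta_{ij}>\rho$ then $\delta_{ij}^2-\rho^2>0$, so $(IV)$ combined with $(III)$ forces $u_{ij}=0$, and by Assumption c) PoI $i$ has at least one in-range agent, so the reduced row problem --- minimize $\sum_{j:\delta_{ij}\le\rho}u_{ij}^m\delta_{ij}^2$ subject to $\sum_{j:\delta_{ij}\le\rho}u_{ij}=1$ and $u_{ij}\in[0,1]$ --- is still feasible. If some in-range agent has $\delta_{ij}=0$, then $J_i=\sum_{j:0<\delta_{ij}\le\rho}u_{ij}^m\delta_{ij}^2\ge 0$ attains the obvious lower bound $0$ exactly when $u_{ij}=0$ for every $j$ with $\delta_{ij}\ne 0$, i.e. exactly Eq.~\eqref{calcolamembership2}, which is trivially a global minimizer. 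Otherwise all $\delta_{ij}>0$, the reduced problem satisfies the restricted Slater's condition (linear constraints, feasible), and I invoke the KKT Theorem~\ref{theoKKT}: with Lagrangian $\sum_{j:\delta_{ij}\le\rho}u_{ij}^m\delta_{ij}^2+\mu_i\big(\sum_{j:\delta_{ij}\le\rho}u_{ij}-1\big)-\sum_{j:\delta_{ij}\le\rho}\nu_{ij}u_{ij}$ (the multipliers of the redundant bounds $u_{ij}\le 1$ will vanish), stationarity reads $m\,u_{ij}^{m-1}\delta_{ij}^2+\mu_i-\nu_{ij}=0$; testing the candidate $u_{ij}>0$ for all in-range $j$, complementary slackness gives $\nu_{ij}=0$, whence $u_{ij}=\big(-\mu_i/(m\delta_{ij}^2)\big)^{1/(m-1)}$; imposing $(I)$ by summation fixes $\big(-\mu_i/m\big)^{1/(m-1)}=\big(\sum_{h:\delta_{ih}\le\rho}\delta_{ih}^{-2/(m-1)}\big)^{-1}$, and substituting back gives precisely Eq.~\eqref{calcolamembershipsparse}. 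It then remains to check that these $u_{ij}$ lie in $(0,1)$, that $\mu_i<0$, and that all KKT conditions hold, so by Theorem~\ref{theoKKT} and strict convexity this is \emph{the} minimizer of the reduced, hence of the original, row-$i$ problem.

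Stacking the rows, the resulting $U$ has cost $\sum_i\min J_i$, which lower-bounds $J$ over the feasible set of Problem~\ref{prob:ourproblem} --- a subset of the set cut out by $(I)$, $(III)$, $(IV)$ --- so it only remains to certify feasibility, i.e. constraint $(II)$. By Assumption c) every agent $j$ senses some PoI $i$; if no agent coincides with $\mathbf{p}_i$ then $u_{ij}>0$ by Eq.~\eqref{calcolamembershipsparse}, whereas if some agent sits on $\mathbf{p}_i$ one uses the freedom left by Eq.~\eqref{calcolamembership2} to grant agent $j$ a positive share of PoI $i$, which is possible without clashes since, by Assumption d), an agent coincides with at most one PoI. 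Hence $(II)$ holds, $\{\mathbf{x},U\}$ is feasible, and it attains the lower bound, so it is a global minimizer of Problem~\ref{prob:ourproblem}. I expect the main obstacle to be precisely this last step --- showing that the coupling constraint $(II)$, dropped during the row-wise minimization, is automatically recovered --- together with the bookkeeping in the degenerate case $\delta_{ij}=0$, where Eq.~\eqref{calcolamembershipsparse} is undefined and must be replaced by Eq.~\eqref{calcolamembership2}; the KKT computation itself is routine.
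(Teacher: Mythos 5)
Your proposal is correct and follows essentially the same route as the paper's proof: fix ${\bf x}$, use (III)--(IV) to force $u_{ij}=0$ out of range, dispatch the coincident case via Eq.~\eqref{calcolamembership2}, and obtain Eq.~\eqref{calcolamembershipsparse} from KKT stationarity plus the normalization constraint (I), with Assumption c) invoked for feasibility of (II). The only difference is technical bookkeeping: the paper eliminates the sign constraints by substituting $u_{ij}=w_{ij}^2$ and works with an equality-constrained problem, whereas you keep multipliers for $u_{ij}\geq 0$ and use complementary slackness --- both lead to the same computation and the same optimizer.
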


\begin{proof} 
Consider a given ${\bf x}$ that satisfies Assumption c). Problem~\ref{prob:ourproblem} reduces to finding $U$ that minimizes \mbox{$J({\bf x}, U)=J(U)$} subject to constraints (I)--(III); as for the proximity constraints (IV) we have that if $\delta_{ij}\leq \rho$ then the corresponding proximity constraint is not active, while for $\delta_{ij}> \rho$, since $u_{ij}\in[0,1]$, the constraint reduces to $u_{ij}=0$.
As a consequence, the objective function becomes
$$
J(U)= \sum_{i=1}^n \sum_{j|\delta_{ij}\leq\rho} u_{ij}^m\delta^2_{ij}.
$$
Let $I_1$ be the index set of the PoIs that are not overlapped by any agent and $I_2$ be the complementary set, i.e., \mbox{$I_1=\{i\in\{1,\ldots,n\}\,|\,\delta_{ij}>0$} for all $j\in\{1,\ldots,r\}$ and \mbox{$I_2=\{1,\ldots,n\}\setminus I_1$}.
We can split $J(U)$ in two terms
\begin{equation}
\label{eq:thetasplit}
J(U)= \underbrace{\sum_{i\in I_1} \sum_{j| \delta_{ij}\leq\rho} u_{ij}^m\delta^2_{ij}}_{J_1(U)}+ \underbrace{\sum_{i\in I_2} \sum_{j| \delta_{ij}\leq\rho} u_{ij}^m\delta^2_{ij}}_{J_2(U)}.
\end{equation}
From Eq.~\eqref{calcolamembership2}, it follows that all terms $\delta_{ij}$ appearing in $J_2(U)$ are null, hence $J_2(U)=0$. This implies that the optimality of $U$ only depends on the entries $u_{ij}$ appearing in $J_1(U)$.
Since ${\bf x}$ satisfies Assumption c) by construction, using Eq.~\eqref{calcolamembershipsparse} it follows that constraints (II) and (III) are always satisfied. 
Furthermore, it is convenient to rewrite the variables $u_{ij}$ as the square of new variables $w_{ij}$, i.e., \mbox{$u_{ij}=w^2_{ij}$.} By doing this, and since constraint (IV) implies that $u_{ij}=0$ when $\delta_{ij}>\rho$, we can rewrite constraint (I) as
\mbox{$
\sum_{j| \delta_{ij}\leq\rho} w^2_{ij}=1
$}.
As a result, our problem becomes that of finding $W^*\in \mathbb{R}^{r\times d}$ that solves
$$
\begin{matrix}
J(W^*)=\min_{W} \sum_{i\in I_1} \sum_{j| \delta_{ij}\leq\rho} w_{ij}^{2m}\delta^2_{ij}\\
\mbox{Subject to}\\
\begin{cases}
\begin{matrix}
\sum_{j| \delta_{ij}\leq\rho} w^2_{ij}=1,& \forall i\in I_1.&
\end{matrix}
\end{cases}
\end{matrix}
$$
Note that the above problem is convex and has just equality constraints, hence the Slater's Condition is trivially satisfied and Theorem~\ref{theoKKT} applies.

Let ${\bm \zeta}=[\zeta_1,\ldots \zeta_q]^T$ be the Lagrangian multipliers associated to the constraints, while the corresponding Lagrangian function is
$$
 J(W,{\bm \zeta})=\sum_{i\in I_1} \sum_{j| \delta_{ij}\leq\rho} w^{2m}_{ij}\delta^2_{ij} + \sum_{i\in I_1} \zeta_i (
\sum_{j| \delta_{ij}\leq\rho} w^2_{ij} -1).
$$
By Theorem~\ref{theoKKT}, $W^*, {\bm \zeta}^*$ is a global optimal solution to the above problem if and only if it satisfies
\begin{equation}
\label{eq:partial2}
\frac{\partial J(W,{\bm \zeta})}{\partial w_{ij}}\Big|_{W^*,{\bm \zeta}^*}=
2m(w_{ij}^*)^{2m-1}\delta_{ij}+2w^*_{ij}\zeta^*_i=0,
\end{equation}
for all $i\in I_1$ and $j\in\{1,\ldots, r\}$ and
\begin{equation}
\label{eq:partial1}
\sum_{j| \delta_{ij}\leq\rho} (w^*)^2_{ij} =1,\quad \forall i\in I_1.
\end{equation}
From Eq. \eqref{eq:partial2} it follows that
\begin{equation}
\label{eq:wijsquareopt}
(w^*_{ij})^2=u_{ij}^*=\left(
{-\zeta^*_i}/{m\delta_{ij}^2}
\right)^{\frac{1}{(m-1)}}.
\end{equation}
Summing over all $h$ such that $\delta_{ih}\leq\rho$ and applying Eq. \eqref{eq:partial1} we get
\begin{equation}
\label{eq:wijsquareopt2}
(-\zeta_i^*)^{\frac{1}{(m-1)}}=\frac{1}{\sum_{h| \delta_{ih}\leq\rho}(
{1}/{m\delta_{ih}^2}
)^{\frac{1}{(m-1)}}}.
\end{equation}
By plugging Eq.~\eqref{eq:wijsquareopt2} in Eq.~\eqref{eq:wijsquareopt}, it follows that
$$
(w^*_{ij})^2= \frac{1}{(m\delta_{ij}^2)^{\frac{1}{(m-1)}}\sum_{h| \delta_{ih}\leq\rho}(
\frac{1}{m\delta_{ih}^2}
)^{\frac{1}{(m-1)}}}=
\frac{1}{\sum_{h| \delta_{ih}\leq\rho}(
\frac{\delta_{ij}}{\delta_{ih}}
)^{\frac{2}{(m-1)}}}.
$$
Therefore, since $u_{ij}^*=(w^*_{ij})^2$, we conclude that Eq. \eqref{calcolamembershipsparse} corresponds to the global optimal solution.
%
%
\end{proof}

\section{Refinement Phase}
\label{sec:refinement}
Let us now discuss the refinement phase within the proposed algorithm, i.e., a way to find the optimal location of the agents, for fixed associations. 

Note that, when the fixed associations are admissible for Problem~\ref{prob:ourproblem}, our problem is equivalent to solving a collection of independent sub-problems (i.e., one for each agent) having the following structure.


\begin{problem}
\label{prob:ourproblem2}
Find ${\bf x}^*_j\in \mathbb{R}^{d}$ that solves
\begin{equation}
\label{prob:singleagent}
\begin{matrix}
J({\bf x}^*_j)=\min_{{\bf x}_j\in \mathbb{R}^{d}} \underset{i| \delta_{ij}\leq\rho}{\sum} u_{ij}^m \delta_{ij}^2\\
\mbox{Subject to}\\
\begin{cases}
\delta_{ij}^2\leq \rho^2,\quad \forall i \mbox{ s.t. } u_{ij}>0.
\end{cases}
\end{matrix}
\end{equation}
\end{problem}


We now characterize the optimal solution of Problem~\ref{prob:ourproblem2}.
To this end, we first define the set of admissible solutions.


\begin{definition}[Admissible Solutions to Problem~\ref{prob:ourproblem2}]
\label{def:feasiblelocations}
The set of admissible solutions to Problem~\ref{prob:ourproblem2} is
\mbox{$
\mathbb{B}^*_j=\underset{i| u_{ij}>0}{\cap} \mathbb{B}_{i,\rho}
$}, where $\mathbb{B}_{i,\rho}$ is the ball of radius $\rho$ centered at the location ${\bf p}_i$ of the $i$-th PoI.
\end{definition}
\begin{remark}
\label{rem:isconvex}
Problem~\ref{prob:ourproblem2} is convex, since for fixed terms~$u_{ij}$ the objective function is a linear combination of convex functions, and similarly the constraints are convex functions.
\end{remark}


We now establish a condition for Problem~\ref{prob:ourproblem2} to satisfy  Slater's  Condition.


\begin{proposition}
\label{prop:isslater}
Problem~\ref{prob:ourproblem2}  satisfies Slater's Condition if and only if $\mathbb{B}^*_j$ is not a singleton.
\end{proposition}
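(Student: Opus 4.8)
The plan is to unwind what Slater's Condition amounts to for Problem~\ref{prob:ourproblem2} and then translate it into a geometric statement about the intersection $\mathbb{B}^*_j$. Problem~\ref{prob:ourproblem2} has only the inequality constraints $g_i({\bf x}_j)=\delta_{ij}^2-\rho^2\leq 0$, one for each $i$ with $u_{ij}>0$, and no equality constraints; each $g_i$ is quadratic, hence nonlinear, so here Slater's Condition and the restricted Slater's Condition coincide and both reduce to: there exists ${\bf x}_j$ with $\|{\bf p}_i-{\bf x}_j\|<\rho$ for every $i$ such that $u_{ij}>0$, i.e., a point lying in the \emph{interior} of every ball $\mathbb{B}_{i,\rho}$ with $u_{ij}>0$. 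Throughout I will use that $\mathbb{B}^*_j$ is nonempty: the refinement phase is invoked only with admissible associations, and admissibility forces $\delta_{ij}\leq\rho$ whenever $u_{ij}>0$, so the current location ${\bf x}_j$ already belongs to $\mathbb{B}^*_j$. This is important so that ``not a singleton'' genuinely means ``at least two points'' and is not vacuously witnessed by the empty set.

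For the ``if'' direction, suppose $\mathbb{B}^*_j$ is not a singleton; being nonempty, it then contains two distinct points ${\bf a}\neq{\bf b}$. I would show that the midpoint ${\bf c}=({\bf a}+{\bf b})/2$ is a strict Slater point. Convexity of each $\mathbb{B}_{i,\rho}$ already gives ${\bf c}\in\mathbb{B}^*_j$; to obtain strict feasibility I invoke the parallelogram identity: for every $i$ with $u_{ij}>0$,
\begin{equation*}
\left\|{\bf p}_i-{\bf c}\right\|^2=\frac{\|{\bf p}_i-{\bf a}\|^2+\|{\bf p}_i-{\bf b}\|^2}{2}-\frac{\|{\bf a}-{\bf b}\|^2}{4}\leq \rho^2-\frac{\|{\bf a}-{\bf b}\|^2}{4}<\rho^2,
\end{equation*}
where the penultimate step uses ${\bf a},{\bf b}\in\mathbb{B}_{i,\rho}$ and the last step uses ${\bf a}\neq{\bf b}$. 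Hence $g_i({\bf c})<0$ for all active indices and ${\bf c}\in\mathbb{B}^*_j$, so Slater's Condition holds.

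For the ``only if'' direction I would argue by contraposition. If $\mathbb{B}^*_j=\{{\bf x}_0\}$ is a singleton but Slater's Condition held, there would exist ${\bf x}_j$ with $\varepsilon:=\rho-\max_{i\,|\,u_{ij}>0}\|{\bf p}_i-{\bf x}_j\|>0$ (a maximum over finitely many indices). Then the ball $\mathbb{B}_{{\bf x}_j,\varepsilon}$ is contained in $\mathbb{B}_{i,\rho}$ for each $i$ with $u_{ij}>0$, hence in $\mathbb{B}^*_j$; since $\mathbb{B}_{{\bf x}_j,\varepsilon}$ has infinitely many points this contradicts $\mathbb{B}^*_j$ being a singleton. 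Therefore Slater's Condition fails whenever $\mathbb{B}^*_j$ is a singleton, which completes the equivalence.

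The argument is elementary; the only step requiring genuine care is showing the midpoint is \emph{strictly} interior, i.e., exploiting strict convexity of the Euclidean ball, which I handle cleanly via the parallelogram identity above rather than via triangle-inequality equality cases. The remaining bookkeeping — that $\mathbb{B}^*_j$ is nonempty and that there is at least one active constraint (guaranteed by constraint (II) of Problem~\ref{prob:ourproblem}) — is routine but should be stated explicitly so the ``singleton'' dichotomy is meaningful.
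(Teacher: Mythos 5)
Your proof is correct, and it is substantially more complete than the one in the paper: the paper's own argument is a single sentence establishing only the direction ``$\mathbb{B}^*_j$ a singleton $\Rightarrow$ Slater fails'' (phrased via tangency of at least two balls), and it leaves the converse direction, the nonemptiness of $\mathbb{B}^*_j$, and the precise meaning of Slater's Condition for this problem implicit. You supply exactly these missing pieces: you correctly reduce Slater's Condition to the existence of a point strictly interior to every ball $\mathbb{B}_{i,\rho}$ with $u_{ij}>0$ (noting that all constraints are nonlinear, so the restricted version coincides with the plain one); your midpoint argument via the parallelogram identity gives a clean, quantitative proof that a non-singleton (and nonempty, by admissibility of $U$) intersection contains a strictly feasible point, which is the strict-convexity step the paper never spells out; and your $\varepsilon$-ball contraposition proves the same direction the paper asserts, but without appealing to the loose ``two balls are tangent'' picture. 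The paper's version buys brevity at the cost of rigor and of silently assuming the ``if'' direction (which is the direction actually invoked later, in Theorem~\ref{prop:propositionrefinement}, when $\mathbb{B}^*_j$ is not a singleton); your version buys a genuine equivalence and makes explicit the role of admissibility and of constraint (II) in guaranteeing that the singleton dichotomy is meaningful. No gaps to report.
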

\begin{proof}
The fact $\mathbb{B}^*_j$ is a singleton implies that at least two constraints are satisfied at the equality (i.e., at least two balls are tangent), thus preventing Slater's Condition to be satisfied. \end{proof}

We now provide an optimality condition to move an agent~$j$ when $U$ is fixed.


\begin{theorem}
\label{theo:optimalx}
Let $U$ be admissible to Problem~\ref{prob:ourproblem}, and suppose $\mathbb{B}^*_j$ is not a singleton.
Then
\begin{equation}
\label{eq:centrilambda}
{\bf x}^*_j= {\underset{i|\delta_{ij}\leq\rho}{\sum} (u_{ij}^m+\lambda^*_{i}){\bf p}_i}\,\,/\,{\underset{i|\delta_{ij}\leq\rho}{\sum} (u_{ij}^m+\lambda^*_{i}) }
\end{equation}
\end{theorem}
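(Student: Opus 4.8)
The plan is to characterize the minimizer of Problem~\ref{prob:ourproblem2} via the KKT Theorem (Theorem~\ref{theoKKT}). First I would check that the hypotheses of that theorem hold. By Remark~\ref{rem:isconvex} the problem is convex; moreover its constraints $g_i({\bf x}_j)=\delta_{ij}^2-\rho^2=\|{\bf p}_i-{\bf x}_j\|^2-\rho^2$ (one for each $i$ with $u_{ij}>0$) are all nonlinear and there are no equality constraints, so the \emph{restricted} Slater's Condition coincides with the plain Slater's Condition, which holds by Proposition~\ref{prop:isslater} since $\mathbb{B}^*_j$ is assumed not to be a singleton. The feasible set $\mathbb{B}^*_j$ is a finite intersection of closed balls, hence compact, and it is nonempty because $U$ is admissible, so a global minimizer ${\bf x}^*_j$ exists; Theorem~\ref{theoKKT} then applies and characterizes it through its conditions 1)--4).

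Next I would form the Lagrangian of Problem~\ref{prob:ourproblem2} with multipliers $\lambda_i\ge 0$ attached to the constraints indexed by $\{i:u_{ij}>0\}$. It is convenient to extend the notation by setting $\lambda_i:=0$ for the remaining indices $i$ with $\delta_{ij}\le\rho$ and $u_{ij}=0$ (for which also $u_{ij}^m=0$), so that all sums can be taken over $\{i:\delta_{ij}\le\rho\}$; here I would use the fact that, since $U$ is admissible for Problem~\ref{prob:ourproblem}, $u_{ij}>0$ forces $\delta_{ij}\le\rho$, i.e., the constraint index set is contained in the objective index set. Differentiating with respect to ${\bf x}_j$ and using $\partial\|{\bf p}_i-{\bf x}_j\|^2/\partial {\bf x}_j=2({\bf x}_j-{\bf p}_i)$, the stationarity condition~1) becomes $\sum_{i|\delta_{ij}\le\rho}(u_{ij}^m+\lambda_i^*)({\bf x}^*_j-{\bf p}_i)=0$, which I would solve for ${\bf x}^*_j$ to obtain exactly Eq.~\eqref{eq:centrilambda}.

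Finally I would check that Eq.~\eqref{eq:centrilambda} is well defined, i.e., that the denominator $\sum_{i|\delta_{ij}\le\rho}(u_{ij}^m+\lambda_i^*)$ is strictly positive: each $\lambda_i^*\ge 0$ by KKT condition~4), each $u_{ij}^m\ge 0$, and by constraint~(II) of Problem~\ref{prob:ourproblem} there is at least one PoI $i$ with $u_{ij}>0$ (hence $\delta_{ij}\le\rho$ and $u_{ij}^m>0$ since $m>1$), so the sum is bounded below by a positive quantity. The multipliers $\lambda_i^*$ in the formula are then pinned down, jointly with ${\bf x}^*_j$, by the remaining KKT conditions (complementary slackness $\lambda_i^*(\delta_{ij}^2-\rho^2)=0$ and primal feasibility); in particular $\lambda_i^*=0$ for every PoI whose ball does not touch ${\bf x}^*_j$, so Eq.~\eqref{eq:centrilambda} reduces to the usual weighted-centroid update whenever no proximity constraint is active.

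The computation is essentially routine; the only points that require care are the bookkeeping of the two distinct index sets (objective over $\delta_{ij}\le\rho$, constraints over $u_{ij}>0$), reconciled through admissibility of $U$, and the justification that the \emph{restricted} Slater's Condition demanded by Theorem~\ref{theoKKT} follows from $\mathbb{B}^*_j$ not being a singleton, for which the nonlinearity of all constraints is the key observation.
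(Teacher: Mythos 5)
Your proposal is correct and follows essentially the same route as the paper: invoke the KKT Theorem (via convexity and Slater's Condition from Proposition~\ref{prop:isslater}), form the Lagrangian of Problem~\ref{prob:ourproblem2}, and solve the stationarity condition $\sum_{i|\delta_{ij}\le\rho}(u_{ij}^m+\lambda_i^*)({\bf x}^*_j-{\bf p}_i)=0$ for ${\bf x}^*_j$. The only differences are cosmetic or supplementary: the paper verifies stationarity through a directional-derivative argument along an arbitrary direction ${\bf y}$ rather than a direct gradient, while you add the (harmless, slightly more careful) checks on index-set bookkeeping and on positivity of the denominator in Eq.~\eqref{eq:centrilambda}.
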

is a global minimizer for Problem~\ref{prob:ourproblem2} 
if and only if  there exist  ${\bm \lambda}^*=[\lambda^*_1,\ldots,\lambda^*_n]^T\in \mathbb{R}^n$  such that: (a)~${\bf x}^*_j\in \mathbb{B}^*_j$; (b)~$\lambda^*_i\geq 0$ for all $i$; (c)~$\lambda^*_{i}\left(\|{\bf p}_i-{\bf x}^*_j\|^2-\rho^2\right)=0$ for all $i$.

\begin{proof}
As discussed in Remark~\ref{rem:isconvex}, Problem~\ref{prob:ourproblem2}  is convex. Moreover, as shown in Proposition~\ref{prop:isslater}, it satisfies Slater's condition and therefore it satisfies also the restricted Slater's condition. 
Therefore, the KKT Theorem (Theorem~\ref{theoKKT}) applies to Problem~\ref{prob:ourproblem2}.
Let us consider the Lagrangian function associated to Problem~\ref{prob:ourproblem2}, i.e.,
\begin{equation}
J({\bf x}_j,{\bm \lambda}^*)=\underset{i| \delta_{ij}\leq\rho}{\sum} u_{ij}^m \delta_{ij}^2+\underset{i| \delta_{ij}\leq\rho}{\sum}\lambda^*_{i}\left(\delta_{ij}^2-\rho^2\right).
\end{equation}
By Theorem~\ref{theoKKT}, a point ${\bf x}^*_j$ is a global minimizer for Problem~\ref{prob:ourproblem2} if and only if: 1)~$\partial J({\bf x}_j,{\bm \lambda})/ \partial {{\bf x}_j} |_{{\bf x}_j^*, {\bm \lambda}^*}=0$; 
2)~$\delta_{ij}\leq \rho$, for all $i\in\{1,\ldots,n\}$; 3)~$\lambda_{i}\geq0$, for all $i\in\{1,\ldots,n\}$; 4)~$\lambda^*_{i}\left(\|{\bf p}_i-{\bf x}^*_j\|^2-\rho^2\right)=0$, for all $i\in\{1,\ldots,n\}$.
Clearly, conditions (a)--(c) are equivalent to the above conditions (2)--(4).
To prove Theorem~\ref{theo:optimalx}, therefore, we must show that any solution satisfying condition (1) has the structure of Eq.~\eqref{eq:centrilambda}; to establish this, we show that, for any ${\bm \lambda}$,
$\partial J({\bf x}_j,{\bm \lambda})/ \partial {{\bf x}_j}$
vanishes at ${\bf x}^*_j$  along any arbitrary direction ${\bf y}\in \mathbb{R}^d$ with $\|{\bf y}\|\neq0$.
Let $t\in \mathbb{R}$ and let ${\bf x}^*_j$ be the optimal solution for the problem in Eq. \eqref{prob:singleagent}, and let us define $J(t)=J({\bf x}^*_j+t{\bf y}, {\bm \lambda})$. It holds
\mbox{$
J(t)=\underset{i| \delta_{ij}\leq\rho}{\sum} (u_{ij}^m+\lambda_{i}) \|{\bf p}_i-({\bf x}^*_j+t{\bf y})\|^2-\rho^2\underset{i| \delta_{ij}\leq\rho}{\sum} \lambda_{i}
$}.
We can rearrange $J(t)$ as
\begin{equation*}
J(t)=\underset{i| \delta_{ij}\leq\rho}{\sum} (u_{ij}^m+\lambda_{i})({\bf p}_i-{\bf x}^*_j-t{\bf y})^T({\bf p}_i-{\bf x}^*_j-t{\bf y})-\rho^2\underset{i| \delta_{ij}\leq\rho}{\sum} \lambda_{i},
\end{equation*}
so that
$
d J(t) / d t=-2\underset{i| \delta_{ij}\leq\rho}{\sum} (u_{ij}^m+\lambda_{i}){\bf y}^T ({\bf p}_i-{\bf x}^*_j-t{\bf y})
$
and the KKT condition~(1) is satisfied if
\begin{equation}
\label{eq:dvJ}
 \begin{aligned}
\dv{J(0)}{t}
&=-2\underset{i| \delta_{ij}\leq\rho}{\sum} (u_{ij}^m+\lambda_{i}) {\bf y}^T ({\bf p}_i-{\bf x}^*_j)=\\
&=-2 {\bf y}^T \underset{i| \delta_{ij}\leq\rho}{\sum} (u_{ij}^m+\lambda_{i})({\bf p}_i-{\bf x}^*_j)=0.
\end{aligned}   
\end{equation}
For arbitrary nonzero ${\bf y}$, Eq.~\eqref{eq:dvJ} holds
if and only if 
$$\underset{i| \delta_{ij}\leq\rho}{\sum} (u_{ij}^m+\lambda_{i})({\bf p}_i-{\bf x}^*_j)=0,$$ 
which, considering ${\bm \lambda}={\bm \lambda}^*$, is equivalent to Eq. \eqref{eq:centrilambda}.
This concludes the proof.  
\end{proof}


We now present some technical results that are fundamental in order to construct a global minimizer for Problem~\ref{prob:ourproblem2}.


\begin{lemma}
\label{lem:projectioninhull}
Given a nonempty, non-singleton set $\mathbb{B}^*$, obtained as the intersection of a collection of $n$ balls $\{\mathbb{B}_{1,\rho},\ldots,\mathbb{B}_{n,\rho}\}$, each centered at ${\bf v}_i \in \mathbb{R}^d$ and with radius $\rho$, i.e.
\mbox{$
\mathbb{B}^*=\cap_{i=1}^n \mathbb{B}_{i,\rho}
$}, for every point ${\bf v}\in \mathbb{R}^d$ there exist non-negative $\mu_1,\ldots, \mu_n$ and a positive $\overline \mu$ with \mbox{$\overline \mu+\sum_{i=1}^n \mu_i =1$}, such that the projection of ${\bf v}$ onto $\mathbb{B}^*$ is given by \mbox{$
\project{\mathbb{B}^*}{{\bf v}}= \overline \mu {\bf v} + \sum_{i=1}^n \mu_i {\bf v}_i
$}.
\end{lemma}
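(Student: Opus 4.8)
The plan is to realize the projection as a convex constrained optimization problem in the form of Eq.~\eqref{prob:convexproblem} and then apply the KKT Theorem (Theorem~\ref{theoKKT}). Concretely, $\project{\mathbb{B}^*}{{\bf v}}$ is the unique minimizer of $f({\bf z})=\|{\bf v}-{\bf z}\|^2$ over the feasible set $\Omega=\mathbb{B}^*=\{{\bf z}\in\mathbb{R}^d\,:\,g_i({\bf z})\le 0,\ i=1,\dots,n\}$ with $g_i({\bf z})=\|{\bf v}_i-{\bf z}\|^2-\rho^2$. The minimizer exists because $\Omega$ is nonempty, closed and bounded, and is unique because $f$ is strictly convex; replacing the norm by its square is harmless, since both are convex with the same minimizer over $\Omega$, but the square is everywhere differentiable, which is what the KKT statement requires. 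Both $f$ and each $g_i$ are convex quadratics, so the problem does have the structure of Eq.~\eqref{prob:convexproblem}, with no equality constraints.

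The first genuine step is to check the restricted Slater's Condition. Since all $g_i$ are nonlinear, this reduces to exhibiting a feasible point at which every $g_i$ is strictly negative. Pick any two distinct points ${\bf a},{\bf b}\in\mathbb{B}^*$ (which exist since $\mathbb{B}^*$ is non-singleton) and let ${\bf c}=({\bf a}+{\bf b})/2$. By the parallelogram identity, for each $i$, $\|{\bf v}_i-{\bf c}\|^2=\tfrac12\|{\bf v}_i-{\bf a}\|^2+\tfrac12\|{\bf v}_i-{\bf b}\|^2-\tfrac14\|{\bf a}-{\bf b}\|^2\le\rho^2-\tfrac14\|{\bf a}-{\bf b}\|^2<\rho^2$, so $g_i({\bf c})<0$ for all $i$. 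Hence Slater's Condition (and the restricted one) holds and Theorem~\ref{theoKKT} applies.

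Next I would form the Lagrangian $J({\bf z},{\bm \mu})=\|{\bf v}-{\bf z}\|^2+\sum_{i=1}^n\mu_i(\|{\bf v}_i-{\bf z}\|^2-\rho^2)$ and impose the stationarity condition (1) of Theorem~\ref{theoKKT} at the optimal pair $({\bf z}^*,{\bm \mu}^*)$, with ${\bf z}^*=\project{\mathbb{B}^*}{{\bf v}}$. The gradient in ${\bf z}$ equals $-2({\bf v}-{\bf z})-2\sum_i\mu_i({\bf v}_i-{\bf z})$, so stationarity at ${\bf z}^*$ gives $({\bf v}-{\bf z}^*)+\sum_i\mu_i^*({\bf v}_i-{\bf z}^*)=0$, i.e.\ ${\bf v}+\sum_i\mu_i^*{\bf v}_i=\big(1+\sum_i\mu_i^*\big){\bf z}^*$. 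Put $S=1+\sum_i\mu_i^*$, which satisfies $S\ge 1>0$ because every $\mu_i^*\ge 0$ by KKT condition (4), and set $\overline\mu=1/S$ and $\mu_i=\mu_i^*/S$. Then $\overline\mu>0$, each $\mu_i\ge 0$, $\overline\mu+\sum_i\mu_i=S/S=1$, and dividing the displayed identity by $S$ yields ${\bf z}^*=\overline\mu{\bf v}+\sum_i\mu_i{\bf v}_i$, which is exactly the asserted representation.

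The only delicate point is the Slater verification — in particular the fact that a non-singleton intersection of equal-radius balls must contain a point strictly interior to all of them; the midpoint argument via the parallelogram law settles this cleanly. Everything else is a routine invocation of Theorem~\ref{theoKKT} followed by the normalization of the multipliers into a convex combination. (As a sanity check, if ${\bf v}\in\mathbb{B}^*$ then ${\bf z}^*={\bf v}$ and $\mu_i^*=0$ for all $i$ is admissible, giving $\overline\mu=1$ and $\mu_i=0$, consistent with the formula.)
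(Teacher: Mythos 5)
Your proof is correct and follows essentially the same route as the paper: cast $\project{\mathbb{B}^*}{{\bf v}}$ as the minimizer of $\|{\bf z}-{\bf v}\|^2$ over the ball-intersection constraints, invoke Theorem~\ref{theoKKT}, read off stationarity ${\bf z}^*=({\bf v}+\sum_i\mu_i^*{\bf v}_i)/(1+\sum_i\mu_i^*)$, and normalize the multipliers into the convex-combination coefficients $\overline\mu$ and $\mu_i$. Your only departure is the explicit midpoint/parallelogram-law verification that a nonempty, non-singleton intersection of equal-radius balls contains a strictly feasible point; the paper merely asserts Slater's condition from the non-singleton hypothesis, so your argument supplies a detail the paper leaves implicit.
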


\begin{proof}
To prove our statement, we show that $\project{\mathbb{B}^*}{{\bf v}}$ is the solution of a convex optimization problem, which can be solved by resorting to KKT Theorem.

Let us recall that the projection $\project{\mathbb{B}^*}{{\bf v}}$ is the point of minimum distance from ${\bf v}$ among those in $\mathbb{B}^*$. In other words, $\project{\mathbb{B}^*}{{\bf v}}$ is the solution of the following problem
$$
\begin{matrix}
\project{\mathbb{B}^*}{{\bf v}}=\min_{{\bf z}\in \mathbb{R}^d} \| {\bf z}-{\bf v}\|^2\\
\mbox{Subject to}\\
\begin{cases}
\begin{matrix}
\| {\bf z}-{\bf v}_i\|^2\leq \rho^2,& \forall i=1,\ldots,n
\end{matrix}
\end{cases}
\end{matrix}
$$
The above problem is convex, since $\| {\bf z}-{\bf v}\|^2$ and $\| {\bf z}-{\bf v}_i\|^2$ are convex functions.
Moreover, since we assumed $\mathbb{B}^*$ is not empty nor a singleton, the above problem satisfies Slater's Condition, so KKT Theorem~\ref{theoKKT} applies.

Let $\gamma_i$ be the Lagrangian multiplier associated to the \mbox{$i$-th} constraint, and let ${\bm \gamma}$ be the stack vector of all $\gamma_i$. The Lagrangian function $L({\bf z},{\bm \gamma})$ for the above problem is
\mbox{$
L({\bf z},{\bm \gamma})= \| {\bf z}-{\bf v}\|^2 + \sum_{i=1}^n \gamma_i (\| {\bf z}-{\bf v}_i\|^2- \rho^2)
$};
hence, \mbox{${\partial L({\bf z},{\bm \gamma})}/{\partial {\bf z}}|_{{\bf z}^*, {\bm \gamma}^*}= 2{\bf z}^*-2{\bf v} + 2\sum_{i=1}^n \gamma^*_i ({\bf z}^*-{\bf v}_i)=0$}, which implies that the optimal solution has the form 
\begin{equation}
\label{eq:gamma}
{\bf z}^*=\frac{{\bf v}+\sum_{i=1}^n \gamma^*_i {\bf v}_i }{1+\sum_{i=1}^n \gamma^*_i},
\end{equation}
where the Lagrangian multipliers $\gamma^*_i$ must satisfy $\gamma^*_i\geq 0$ for all $i=1,\ldots, n$ and  $\gamma^*_i (\| {\bf z}^*-{\bf v}_i\|^2- \rho^2)=0$  for all $i=1,\ldots, n$.
Under the above constraints on $\gamma^*_i$, our statement is verified for
\mbox{$\overline \mu = {1}/{(1+\sum_{i=1}^n \gamma^*_i)}>0$}
and
\mbox{$\mu_i = {\gamma^*_i}/{1+\sum_{i=1}^n \gamma^*_i}\geq 0$}, for all $i=1,\ldots, n$.
This completes our proof. \end{proof}


\begin{corollary}
\label{coroll:good}
If ${\bf v}\not \in \mathbb{B}^*$ then $\project{\mathbb{B}^*}{{\bf v}}$ lies on the intersection of the boundaries of the balls corresponding to positive coefficients $\gamma_i$.
\end{corollary}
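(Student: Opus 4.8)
The plan is to read the claim directly off the KKT characterization already established in the proof of Lemma~\ref{lem:projectioninhull}, without introducing any new optimization problem. Recall that there $\project{\mathbb{B}^*}{{\bf v}}={\bf z}^*$ was shown to satisfy, together with multipliers $\gamma^*_1,\ldots,\gamma^*_n\geq 0$, the representation ${\bf z}^*=({\bf v}+\sum_{i}\gamma^*_i{\bf v}_i)/(1+\sum_{i}\gamma^*_i)$ and the complementary slackness conditions $\gamma^*_i(\|{\bf z}^*-{\bf v}_i\|^2-\rho^2)=0$ for all $i$. These two facts are all I would use.

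First I would argue that, when ${\bf v}\not\in\mathbb{B}^*$, at least one multiplier is strictly positive. The projection always lies in the closed convex set, so ${\bf z}^*\in\mathbb{B}^*$, whereas ${\bf v}\not\in\mathbb{B}^*$; hence ${\bf z}^*\neq{\bf v}$. If all $\gamma^*_i$ were zero, the representation above would collapse to ${\bf z}^*={\bf v}$, a contradiction. Consequently the index set $S=\{i:\gamma^*_i>0\}$ is nonempty.

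Next I would invoke complementary slackness on $S$: for each $i\in S$ we have $\gamma^*_i>0$, so $\|{\bf z}^*-{\bf v}_i\|^2-\rho^2=0$, i.e.\ ${\bf z}^*\in\partial\mathbb{B}_{i,\rho}$. Since this holds for every such $i$, we conclude ${\bf z}^*\in\bigcap_{i\in S}\partial\mathbb{B}_{i,\rho}$, which is precisely the intersection of the boundaries of the balls associated with positive coefficients $\gamma_i$, as claimed.

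There is essentially no obstacle here; the only point that deserves an explicit word is that ${\bf v}\not\in\mathbb{B}^*$ forces ${\bf z}^*\neq{\bf v}$, which rests on the standard property that the metric projection of a point onto a closed convex set equals the point if and only if the point already belongs to the set. Everything else is a direct reading of the KKT conditions derived in Lemma~\ref{lem:projectioninhull}.
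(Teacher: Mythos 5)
Your proposal is correct and follows essentially the same route as the paper: complementary slackness from the KKT conditions in Lemma~\ref{lem:projectioninhull} places $\project{\mathbb{B}^*}{{\bf v}}$ on the boundary of every ball with $\gamma^*_i>0$, and the representation in Eq.~\eqref{eq:gamma} together with ${\bf v}\notin\mathbb{B}^*$ forces at least one $\gamma^*_i$ to be positive. Your explicit justification that ${\bf z}^*\neq{\bf v}$ (via the projection-fixed-point property) just spells out the step the paper leaves implicit.
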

\begin{proof}
From Lemma~\ref{lem:projectioninhull} the coefficients $\gamma_i$ are non-negative and must satisfy
\mbox{$
\gamma_i (\| {\bf z}-{\bf v}_i\|^2- \rho^2)=0$}, for all \mbox{$i=1,\ldots, n$}.
Hence, $\project{\mathbb{B}^*}{{\bf v}}$ must belong to the intersection of the boundaries of the balls associated to positive $\gamma_i$.
The proof is complete noting that, when \mbox{${\bf v}\not \in \mathbb{B}^*$},  Eq.~\eqref{eq:gamma} implies that there must be at least a positive $\gamma_i$.
 \end{proof}
 
 
We now provide a technical result which will be used later in the main theorem of this section.
 
 
\begin{lemma}
\label{lem:sherman}
Let $V=\{{\bf v}_1,\ldots, {\bf v}_n\}\subset \mathbb{R}^d$ with $n>1$ and 
consider given $\overline \mu>0$ and $\mu_i \in (0,1)$ for all $i\in\{1,\ldots,n\}$ such that
\mbox{$\overline \mu+\sum_{i=1}^n \mu_i=1$}.
For any choice of $\alpha>0$ there is a choice of $\lambda_i$ for all $i\in\{1,\ldots,n\}$ satisfying
\begin{equation}
\label{eq:choiceoflambda1}
\mu_i={\lambda_i}/{(\alpha +  \sum_{h=1}^n \lambda_h)},\quad \lambda_i\geq 0.
\end{equation}
\end{lemma}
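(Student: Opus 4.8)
The plan is to solve the system \eqref{eq:choiceoflambda1} for the $\lambda_i$ explicitly. The key remark is that, although \eqref{eq:choiceoflambda1} looks nonlinear because the unknowns $\lambda_i$ appear both in the numerator and, through the sum, in the denominator, it becomes linear once one introduces the auxiliary scalar $S=\sum_{h=1}^n\lambda_h$: the $i$-th relation then reads $\lambda_i=\mu_i(\alpha+S)$, so that every $\lambda_i$ is determined by $S$ alone.

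Next I would pin down $S$ by a consistency argument. Summing the relations $\lambda_i=\mu_i(\alpha+S)$ over $i$ gives $S=(\alpha+S)\sum_{i=1}^n\mu_i$. Setting $M=\sum_{i=1}^n\mu_i=1-\overline\mu$ and using the hypothesis $\overline\mu>0$ (which in particular forces $M<1$), this equation solves uniquely to $S=\alpha M/\overline\mu$, a well-defined non-negative number. Back-substitution then yields the closed form $\lambda_i=\mu_i(\alpha+S)=\alpha\mu_i/\overline\mu$, and one verifies directly that $\alpha+\sum_{h=1}^n\lambda_h=\alpha+\alpha M/\overline\mu=\alpha/\overline\mu$, whence $\lambda_i/(\alpha+\sum_{h=1}^n\lambda_h)=\mu_i$, as required.

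Finally I would check the sign condition: since $\alpha>0$, $\mu_i>0$ and $\overline\mu>0$, the proposed values $\lambda_i=\alpha\mu_i/\overline\mu$ are strictly positive, so in particular $\lambda_i\ge 0$. There is no genuine obstacle in this lemma; the only point requiring care is that the denominator $\overline\mu$ be strictly positive so that the division is legitimate, which is precisely the standing hypothesis. Note also that the set $V$ and the dimension $d$ do not enter the argument at all — they are listed only because the lemma will be applied, in the main theorem of the section, to centers of balls in $\mathbb{R}^d$.
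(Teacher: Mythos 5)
Your proof is correct, and it reaches the same closed form as the paper by a more elementary route. The paper also rewrites \eqref{eq:choiceoflambda1} as $\lambda_i=\mu_i\bigl(\alpha+\sum_{h=1}^n\lambda_h\bigr)$, but then stacks it as the linear system ${\bm \lambda}=\alpha{\bm \mu}+{\bm \mu}{\bf 1}^T{\bm \lambda}$ and invokes the Sherman--Morrison theorem to conclude that $(I-{\bm \mu}{\bf 1}^T)^{-1}=I+{\bm \mu}{\bf 1}^T/\overline{\mu}$ exists and has non-negative entries, whence ${\bm \lambda}=(I+{\bm \mu}{\bf 1}^T/\overline{\mu})\alpha{\bm \mu}\geq 0$. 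You instead exploit the rank-one structure directly: introducing $S=\sum_h\lambda_h$, summing to get the scalar consistency equation $S=(\alpha+S)(1-\overline{\mu})$, solving $S=\alpha(1-\overline{\mu})/\overline{\mu}$, and back-substituting to obtain the explicit formula $\lambda_i=\alpha\mu_i/\overline{\mu}$, which you then verify satisfies \eqref{eq:choiceoflambda1}. The two computations produce identical values of ${\bm \lambda}$ (expanding the paper's matrix product gives exactly $\alpha{\bm \mu}/\overline{\mu}$), but your argument avoids the matrix inversion lemma altogether, yields uniqueness of the solution, and gives strict positivity $\lambda_i>0$ rather than merely $\lambda_i\geq 0$; the paper's formulation, on the other hand, is stated in the stacked form that it reuses verbatim when constructing ${\bm \lambda}^*$ in Theorem~\ref{prop:propositionrefinement}. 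Your closing observation that $V$ and $d$ play no role in the statement is also accurate.
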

\begin{proof}
For the sake of the proof, it is convenient to rearrange Eq.~\eqref{eq:choiceoflambda1} as
\mbox{$\lambda_i=\mu_i(\alpha +  \sum_{h=1}^n \lambda_h)$}, $\lambda_i\geq 0$.
Stacking for all $\lambda_i$ we get
\begin{equation}
\label{eq:choiceoflambda2}
{\bm \lambda}=\alpha {\bm \mu}+ {\bm \mu} {\bf 1}^T {\bm \lambda},
\end{equation}
where ${\bm \lambda}$ and ${\bm \mu}$ are the stack vectors collecting all $\lambda_i$ and $\mu_i$, respectively.
Since by assumption $\overline \mu>0$, it holds \mbox{$1-{\bf 1}^T{\bm \mu}=\overline \mu>0$}, from the Sherman-Morrison Theorem \cite{sherman1950adjustment} it follows that the matrix 
\mbox{$(I-{\bm \mu}{\bf 1}^T)^{-1}$} exists and has the following structure
\begin{equation}
\label{eq:intermediatesherman2a}
(I-{\bm \mu}{\bf 1}^T)^{-1}=
I+{{\bm \mu}{\bf 1}^T}/{\overline \mu},
\end{equation}
where we notice that all entries are non-negative by construction.
At this point, by plugging Eq.~\eqref{eq:intermediatesherman2a} in Eq.~\eqref{eq:choiceoflambda2} we obtain
\mbox{${\bm \lambda}=(I-{\bm \mu}{\bf 1}^T)^{-1}\alpha{\bm \mu}=(I+{{\bm \mu}{\bf 1}^T}/{\overline \mu})\alpha{\bm \mu}$}.
Since $\alpha>0$,  $\mu_i\geq 0$ and the matrix in Eq.~\eqref{eq:intermediatesherman2a} has non-negative entries, we conclude that ${\bm \lambda}\geq0$. \end{proof}

We now provide a constructive method to obtain a solution to Problem~\ref{prob:ourproblem2}. 

\begin{theorem}
\label{prop:propositionrefinement}
Let $U$ be admissible to Problem~\ref{prob:ourproblem} and assume that ${\bf x} \in \mathbb{R}^{rd}$ satisfies Assumption c).
Then
\begin{equation}
\label{fuzzycenterssparsefinal}
{\bf x}^*_j=\projectbig{\mathbb{B}^*_j}{{\underset{i|\delta_{ij}\leq\rho}{\sum} u^m_{ij} {\bf p}_i}\,/{\underset{i|\delta_{ij}\leq\rho}{\sum} u^m_{ij} }}
\end{equation}
is a global minimizer for Problem~\ref{prob:ourproblem2}.
\end{theorem}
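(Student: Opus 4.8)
The plan is to reduce the statement to the optimality characterization already obtained in Theorem~\ref{theo:optimalx}. That theorem says a point of the form \eqref{eq:centrilambda} is a global minimizer of Problem~\ref{prob:ourproblem2} precisely when there is a nonnegative multiplier vector ${\bm \lambda}^*$ satisfying the feasibility and complementary-slackness conditions (a)--(c); so it suffices to show that the point ${\bf x}^*_j$ produced by \eqref{fuzzycenterssparsefinal} can be written in the form \eqref{eq:centrilambda} for some such ${\bm \lambda}^*$. Two preliminary observations are in order. First, the weighted centroid $\bar{\bf x}_j := (\sum_{i|\delta_{ij}\leq\rho} u^m_{ij}{\bf p}_i)/(\sum_{i|\delta_{ij}\leq\rho} u^m_{ij})$ is well defined: admissibility of $U$ forces $u_{ij}=0$ whenever $\delta_{ij}>\rho$ (by constraints (III)--(IV)), while constraint (II) gives $\sum_i u_{ij}>0$, so at least one $u_{ij}$ with $\delta_{ij}\leq\rho$ is positive and $S:=\sum_{i|\delta_{ij}\leq\rho} u^m_{ij}>0$. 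Second, if $\mathbb{B}^*_j$ is a singleton the claim is immediate, since its unique element is then the only feasible point of Problem~\ref{prob:ourproblem2} and also equals any projection onto $\mathbb{B}^*_j$; hence from now on I assume $\mathbb{B}^*_j$ is not a singleton, which by Proposition~\ref{prop:isslater} makes Theorem~\ref{theo:optimalx} applicable.

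Now consider the two cases according to whether $\bar{\bf x}_j\in\mathbb{B}^*_j$. If it is, then ${\bf x}^*_j=\project{\mathbb{B}^*_j}{\bar{\bf x}_j}=\bar{\bf x}_j$, which is exactly \eqref{eq:centrilambda} with ${\bm \lambda}^*=0$; conditions (a)--(c) hold trivially and Theorem~\ref{theo:optimalx} concludes. If $\bar{\bf x}_j\notin\mathbb{B}^*_j$, I would invoke Lemma~\ref{lem:projectioninhull} with ${\bf v}=\bar{\bf x}_j$ and centers ${\bf v}_i={\bf p}_i$ ranging over the active set $\{i : u_{ij}>0\}$ (equivalently those defining $\mathbb{B}^*_j$), obtaining $\bar\mu>0$ and $\mu_i\geq0$ with $\bar\mu+\sum_i\mu_i=1$ and ${\bf x}^*_j=\bar\mu\,\bar{\bf x}_j+\sum_i\mu_i{\bf p}_i$. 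Substituting the definition of $\bar{\bf x}_j$ and collecting the coefficient of each ${\bf p}_i$ gives ${\bf x}^*_j=\sum_i(\tfrac{\bar\mu}{S}u^m_{ij}+\mu_i){\bf p}_i$; since these coefficients sum to one, factoring out $\bar\mu/S$ shows that this is exactly \eqref{eq:centrilambda} with $\lambda^*_i:=(S/\bar\mu)\,\mu_i$ on the active indices and $\lambda^*_i:=0$ elsewhere. (That such a rescaling always yields nonnegative multipliers realizing the prescribed convex-combination weights is precisely the content of Lemma~\ref{lem:sherman}, applied with $\alpha=S$, which gives the coordinate-free route to the same conclusion.)

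It then remains to check conditions (a)--(c) of Theorem~\ref{theo:optimalx} for this ${\bm \lambda}^*$. Condition (a) is immediate, because ${\bf x}^*_j$ is by construction a projection onto $\mathbb{B}^*_j$. Condition (b) holds since $S>0$, $\bar\mu>0$ and $\mu_i\geq0$. For the complementary-slackness condition (c) I would use Corollary~\ref{coroll:good}: as $\bar{\bf x}_j\notin\mathbb{B}^*_j$, its projection lies on the intersection of the boundaries of exactly those balls $\mathbb{B}_{i,\rho}$ with positive coefficient, so $\mu_i>0$ implies $\|{\bf p}_i-{\bf x}^*_j\|^2=\rho^2$; since $\lambda^*_i$ is a nonnegative multiple of $\mu_i$, the product $\lambda^*_i(\|{\bf p}_i-{\bf x}^*_j\|^2-\rho^2)$ vanishes for every $i$. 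With (a)--(c) established, Theorem~\ref{theo:optimalx} gives that ${\bf x}^*_j$ is a global minimizer of Problem~\ref{prob:ourproblem2}. I expect the main obstacle to be the bookkeeping around index sets: one must reconcile the active set $\{i:u_{ij}>0\}$ coming from Lemma~\ref{lem:projectioninhull}/Corollary~\ref{coroll:good} with the set $\{i:\delta_{ij}\leq\rho\}$ used in \eqref{eq:centrilambda}, and verify that assigning $\lambda^*_i=0$ to indices with $u_{ij}=0$ (or $\mu_i=0$) is consistent with all of (a)--(c); once this is set up carefully, the remainder is elementary algebra.
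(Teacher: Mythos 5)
Your proposal is correct and follows essentially the same route as the paper's proof: the same case split (singleton $\mathbb{B}^*_j$, centroid already feasible, centroid outside), then Lemma~\ref{lem:projectioninhull} and Corollary~\ref{coroll:good} to express the projection as a convex combination, construction of nonnegative multipliers to match Eq.~\eqref{eq:centrilambda}, and verification of conditions (a)--(c) of Theorem~\ref{theo:optimalx}. Your explicit choice $\lambda^*_i=(S/\bar\mu)\mu_i$ is exactly what the paper obtains via Lemma~\ref{lem:sherman} with $\alpha=\hat u=S$, so the arguments coincide.
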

\begin{proof} 
Let us define the set $I_{j}=\{i\in\{1,\ldots,n\}\,|\, u_{ij}>0\}$ and let
\begin{equation}
\label{eq:hatxhatudef}
\hat {\bf x}_j= {\sum_{i=1}^n u_{ij}^m {\bf p}_i }\,/\,{\sum_{i=1}^n u_{ij}^m }, \quad \hat u=\sum_{i=1}^n u_{ij}^m.
\end{equation}
We first handle two special cases. 
Note that, if $\hat {\bf x}_j \in \mathbb{B}^*_j$ then $\project{\mathbb{B}^*_j}{\hat {\bf x}_j}=\hat {\bf x}_j$ and $\hat {\bf x}_j$ itself satisfies Theorem~\ref{theo:optimalx} with all $\lambda^*_i=0$.
Moreover, if $\mathbb{B}^*_j$ is a singleton, since ${\bf x}$ satisfies Assumption c) it must hold $\mathbb{B}^*_j=\{{\bf x}_j\}$. In this case, Problem~\ref{prob:ourproblem2} fails to satisfy Slater's conditions and Theorem \ref{theo:optimalx} does not apply;
however, Theorem \ref{theo:optimalx} is no longer required if $\mathbb{B}^*_j=\{{\bf x}_j\}$, as $\project{\mathbb{B}^*_j}{\hat {\bf x}_j}={\bf x}_j$ by construction. In other words, in this case agent~$j$ does not move.
We now focus on the case $\hat {\bf x}_j \not\in \mathbb{B}^*_j$ and $\mathbb{B}^*_j\neq \{{\bf x}_j\}$, that is ${\bf x}_j$ does not belong to $\mathbb{B}^*_j$ and $\mathbb{B}^*_j$ is not a singleton.
In this setting, our goal is to show that $\project{\mathbb{B}^*_j}{\hat {\bf x}}$ is a solution having the form of Eq.~\eqref{eq:centrilambda}, for which a closed form of the Lagrangian multipliers $\lambda^*_i$ that satisfies conditions (a)--(c) in Theorem~\ref{theo:optimalx} is given.
If  $\hat {\bf x}\not\in \mathbb{B}^*_j$, then $\project{\mathbb{B}^*_j}{\hat {\bf x}}\in \mathbb{B}^*_j$ lies on the boundary of $\mathbb{B}^*_j$, hence Condition (a) in Theorem~\ref{theo:optimalx} is satisfied.
By Lemma~\ref{lem:projectioninhull} and Corollary~\ref{coroll:good}, there is an $I^*_j\subseteq I_j$ such that
\begin{equation}
\label{eq:bcomb1}
\project{\mathbb{B}^*_j}{\hat {\bf x}}= \hat \mu \hat {\bf x} + \sum_{i\in I^*_j} \mu_i {\bf p}_i.
\end{equation}
with $\hat \mu\in(0,1)$, $\mu_i\in(0,1)$ for all $i\in I^*_j$ and \mbox{$\hat \mu+ \sum_{i\in I^*_j} \mu_i=1$.}
In other words, the projection is a convex combination of $\hat {\bf x}$ and the location of the PoIs ${\bf p}_i$ for $i\in I^*_j$, where the coefficient $\hat \mu$ for $\hat {\bf x}$ is strictly positive by construction.

Let ${\bm \lambda}^*$ and ${\bm \mu}$ be the stack vectors of the Lagrangian multipliers $\lambda^*_i$ and the coefficients $\mu_i$ for all $i\in I^*_j$, respectively. By Proposition~\ref{lem:sherman}, choosing
\mbox{${\bm \lambda}^*=\hat u(I+{{\bm \mu}{\bf 1}^T}/{\hat \mu}){\bm \mu}\geq0$}
implies
\begin{equation}
\label{eq:choiceoflambda}
\mu_i ={\lambda^*_i}/{(\hat u +  \sum_{h\in I^*_j } \lambda^*_h)}, \quad \forall i\in I^*,
\end{equation}
and therefore
\begin{equation}
\label{eq:choiceoflambdaa}
\hat \mu={\hat u}/{(\hat u +  \sum_{h\in I^*_j } \lambda^*_h)}.
\end{equation}
Plugging Eq.~\eqref{eq:choiceoflambda} and Eq.~\eqref{eq:choiceoflambdaa} in Eq.~\eqref{eq:bcomb1}, and choosing $\lambda^*_i=0$ for all $i\in \{1,\ldots,n\}\setminus I^*_j$ we get
\mbox{$
\project{\mathbb{B}^*_j}{\hat {\bf x}}= \frac{\hat u}{\hat u +  \sum_{h\in I^*_j } \lambda^*_h} \hat {\bf x}_j + \sum_{i\in I^*_j} \frac{\lambda^*_i}{\hat u +  \sum_{h\in I^*_j } \lambda^*_h} {\bf p}_i
$},
which has the same structure as ${\bf x}^*_j$ in Eq.~\eqref{eq:centrilambda}.
Note that all $\lambda^*_i\geq 0$, hence condition (b) in Theorem~\ref{theo:optimalx} is satisfied. Moreover, as discussed above, Corollary~\ref{coroll:good} guarantees that 
$
\|\project{\mathbb{B}^*_j}{\hat {\bf x}}-{\bf p}_i\|=\rho$ for all $i\in I^*_j
$,
and since $\lambda^*_i=0$ for $i\not\in I^*_j$ also condition (c) in Theorem~\ref{theo:optimalx} is satisfied.
This completes the proof. \end{proof}

\section{Proposed Algorithm and Simulations}
\label{sec:algorithm}
We now discuss our distributed algorithm, based on the repeated alternated solution of the two sub-problems solved in the previous sections.
%
%
Specifically, each agent alternates between the assignment phase, where it computes the associations with the sensed PoIs based on its current location, and the refinement phase, where it selects a new location based on the sensed PoIs and the associations.
We point out that, knowing the associations, the refinement phase is executed locally by each agent, with no need for coordination among neighboring agents.
Conversely, communication among neighboring agents is required during the assignment phase, to collect all the distances $\|{\bf p}_i-{\bf x}_h(k)\|$ involving the $i$-th PoI and the agents able to sense it.
Note that communication is ensured by Assumption b).
This allows each agent to compute the associations via Eqs.~\eqref{calcolamembership2} ~and~\eqref{calcolamembershipsparse}.
Note also that the algorithm execution requires an implicit coordination among the agents, which can be achieved by resorting to well known protocols such as consensus algorithms \cite{Olfati1}.
Details are omitted for the sake of brevity.
Finally, a stopping criterion is required: a simple choice is to let each agent stop when its new location is closer than $\epsilon$ to the current one. 


Let us now numerically demonstrate the effectiveness of the proposed algorithm.
In Figure \ref{fig:example1}, we show an example where PoIs and agents are embedded in the unit square  $[0,1]^2\subset \mathbb{R}^2$. Specifically, we consider $n=140$ PoIs (circles) and $r=4$ agents (triangles).
The figure compares the result of the proposed algorithm for $\rho=0.35$ with the result of the standard C-means.
In Figures \ref{fig:example1:1} and \ref{fig:example1:2}, we assign the same colors to those PoIs having their largest association with the same agent. Moreover we report the initial and final position for the agents within the proposed algorithm in white and blue, respectively and, for comparison, we report in red the final positions found via standard C-means. According to the plots, in spite of sparsity, quite similar results are achieved.
The plots in Figures \ref{fig:example1:3} and \ref{fig:example1:4} show the associations obtained for one cluster within the proposed algorithm (for $\rho=0.35$) and within the standard C-means.
This gives an intuitive explanation of the effect of the proximity constraints on the associations, i.e., membership with a very low degree are truncated in the proposed algorithm.
Figure \ref{fig:example2} reports the value of the objective function with respect to the iterations for the solution found by the \mbox{C-means} and by the proposed algorithm for different values of~$\rho$.
Overall, the figure numerically demonstrate that the behavior of the proposed algorithm tends to coincide with the behavior of the standard C-means as $\rho$ grows, i.e., as $\rho$ increases, the agents become more and more aware of the PoIs, until the proximity constraints no longer exist. 
Notably, this numerical simulation suggests that our algorithm can be thought as a generalization of the standard C-means. 

\section{Conclusions}
\label{sec:conclusions}

In this paper we provide a novel distributed coverage algorithm for a set of  agents aiming at covering in a non-exclusive way a discrete and finite set of points of interest, which is inspired by the standard C-Means algorithm. 

Several directions are envisioned for future work: (1)~consider an asynchronous setting and moving PoIs; (2)~ introduce the possibility for an agent to leave some of the PoIs  uncovered if they are  already covered by other agents; (3)~provide a formal characterization of the convergence properties of the overall proposed algorithm.

\begin{figure}[ht!]
 \centering
\subfloat[]{\label{fig:example1:1} \includegraphics[width=0.21\textwidth]{./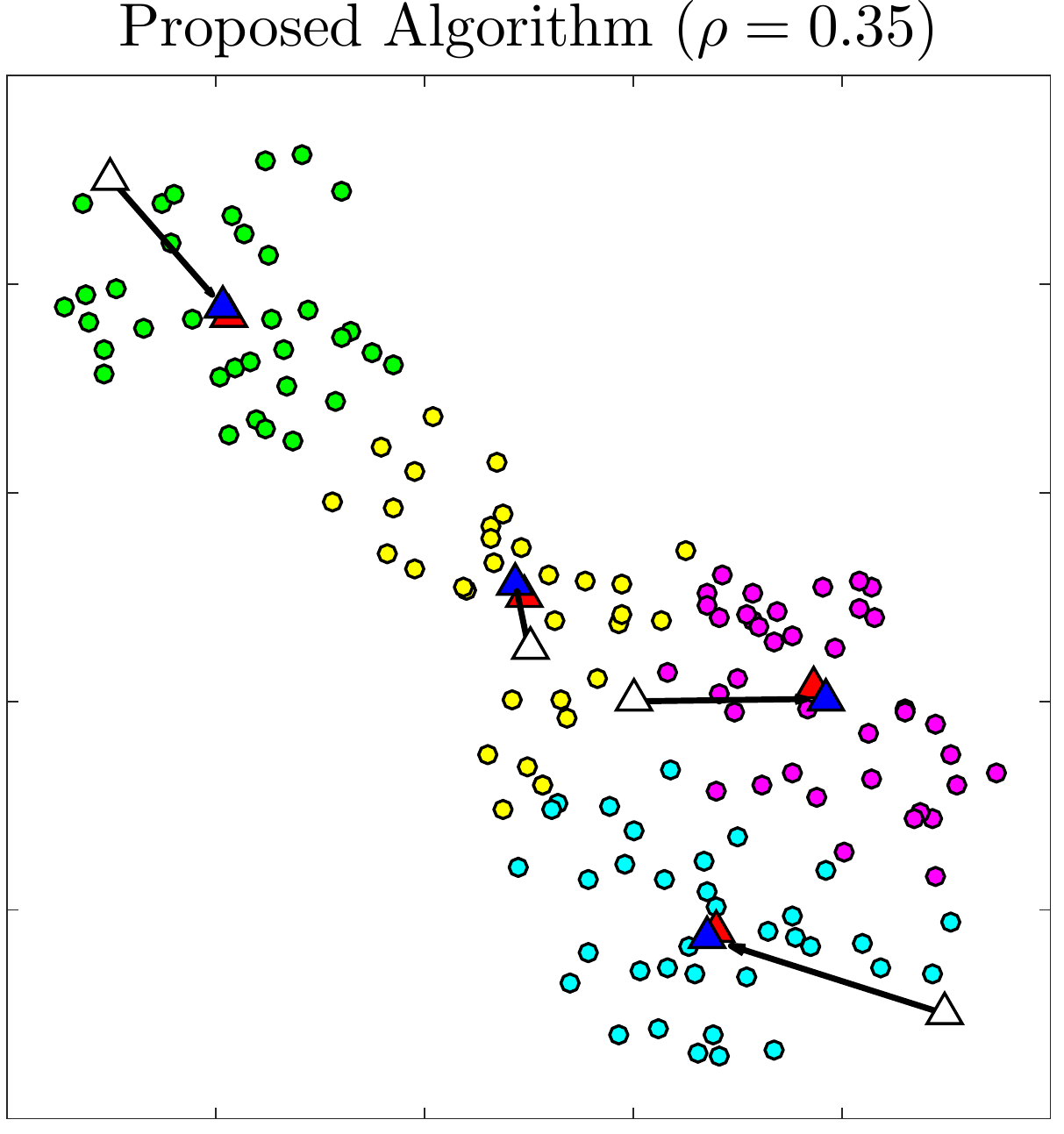}}
\hspace{1mm}
\subfloat[]{\label{fig:example1:2} \includegraphics[width=0.21\textwidth]{./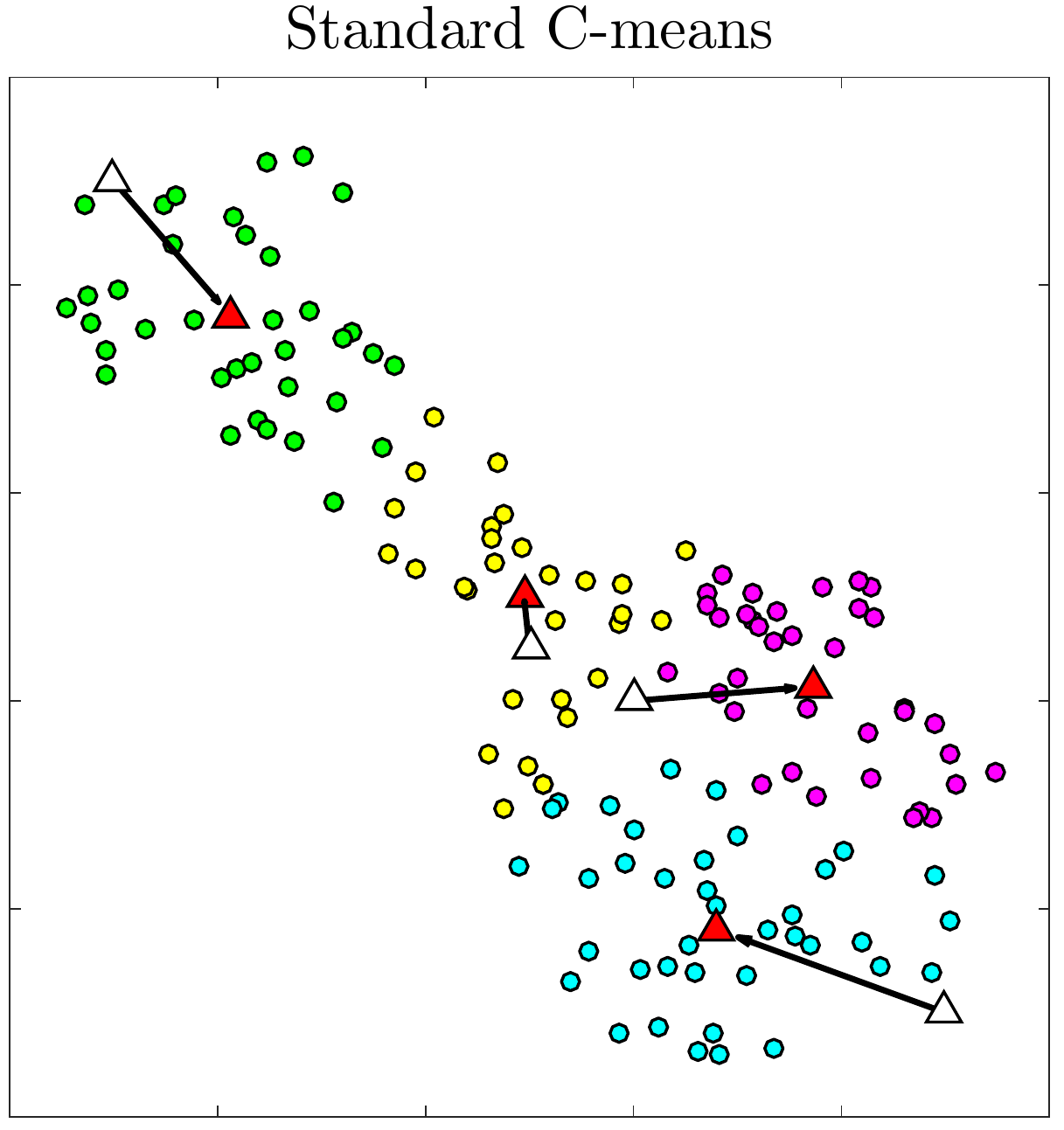}}
\vspace{1.5mm}
\subfloat[]{\label{fig:example1:3} \includegraphics[width=0.21\textwidth]{./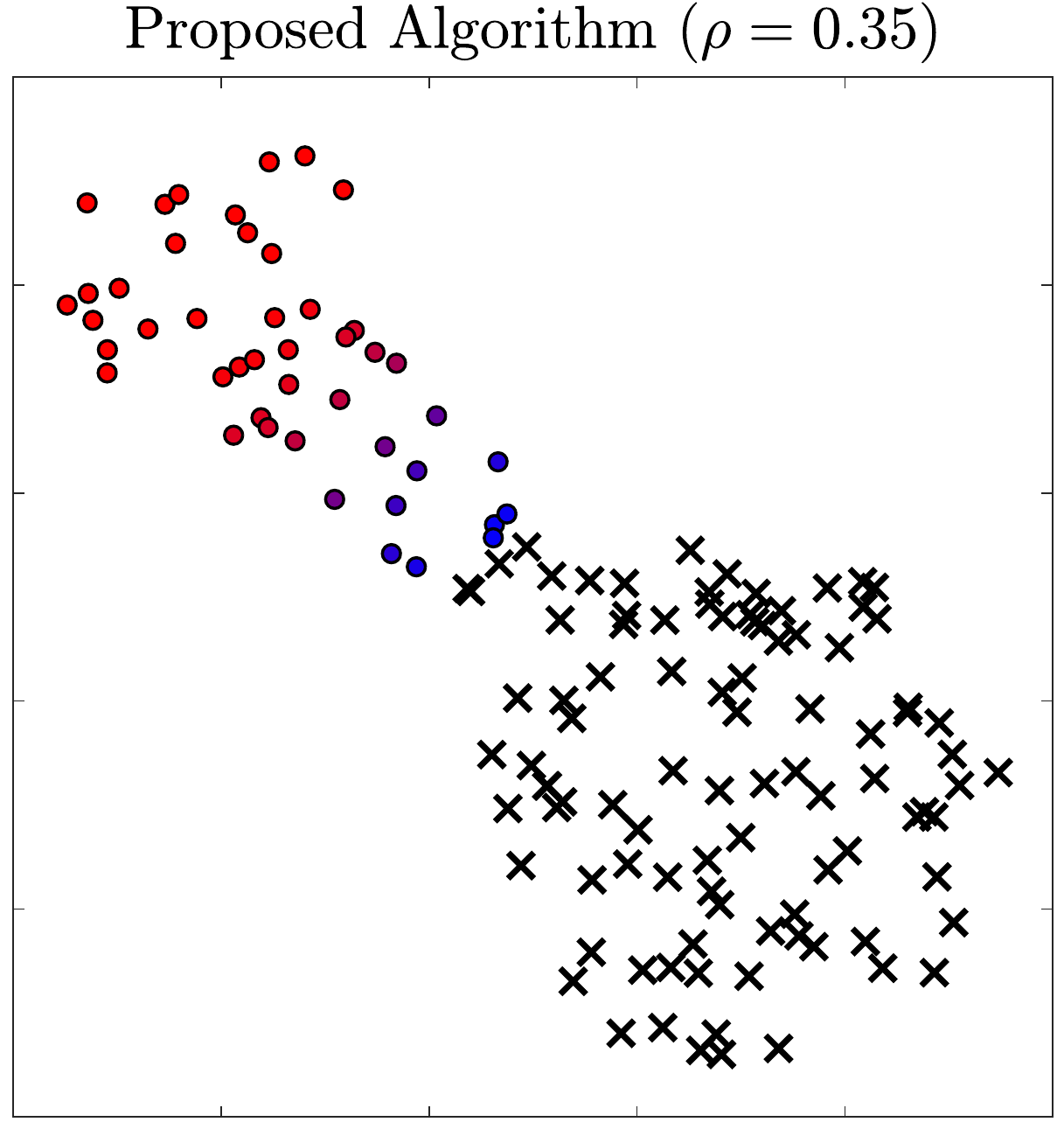}}
\hspace{1mm}
\subfloat[]{\label{fig:example1:4} \includegraphics[width=0.21\textwidth]{./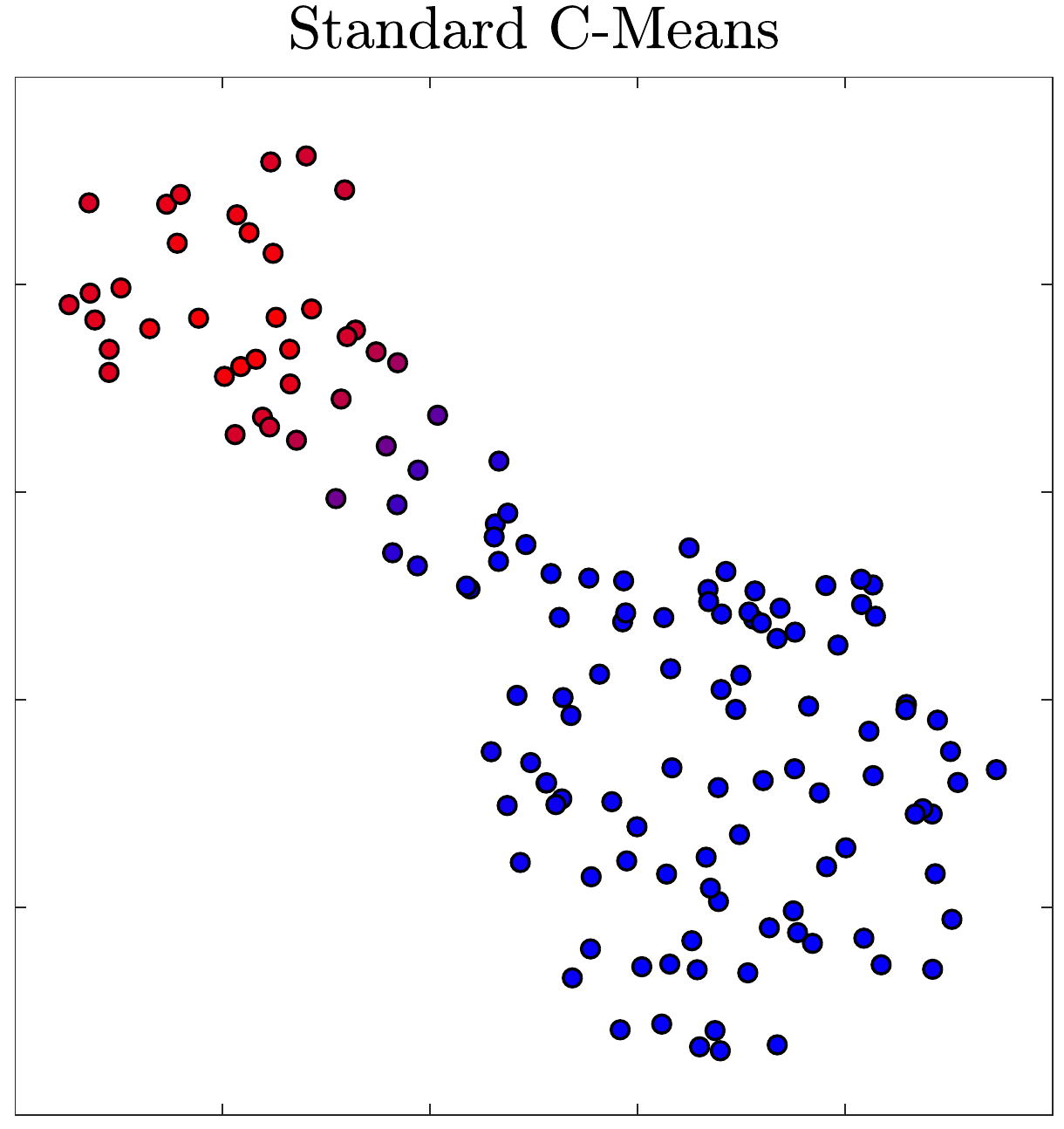}}
\caption{Plots \ref{fig:example1:1} and \ref{fig:example1:2}: comparison of the proposed algorithm for $\rho=0.35$ (plot \ref{fig:example1:1}) with the standard C-means (plot \ref{fig:example1:2}), considering a particular instance. 
Plots \ref{fig:example1:3} and \ref{fig:example1:4}: comparison of the associations between a particular agent and all the PoIs within the proposed algorithm when $\rho=0.35$ (plot \ref{fig:example1:3}) and within the standard C-means (plot \ref{fig:example1:4}).
The color of the PoI tends to red or blue as the associations approach one or zero, respectively. Black crosses indicate associations that are exactly zero.}
 \label{fig:example1}
\end{figure}

 \begin{figure}
 \centering
\includegraphics[width=.45\textwidth]{./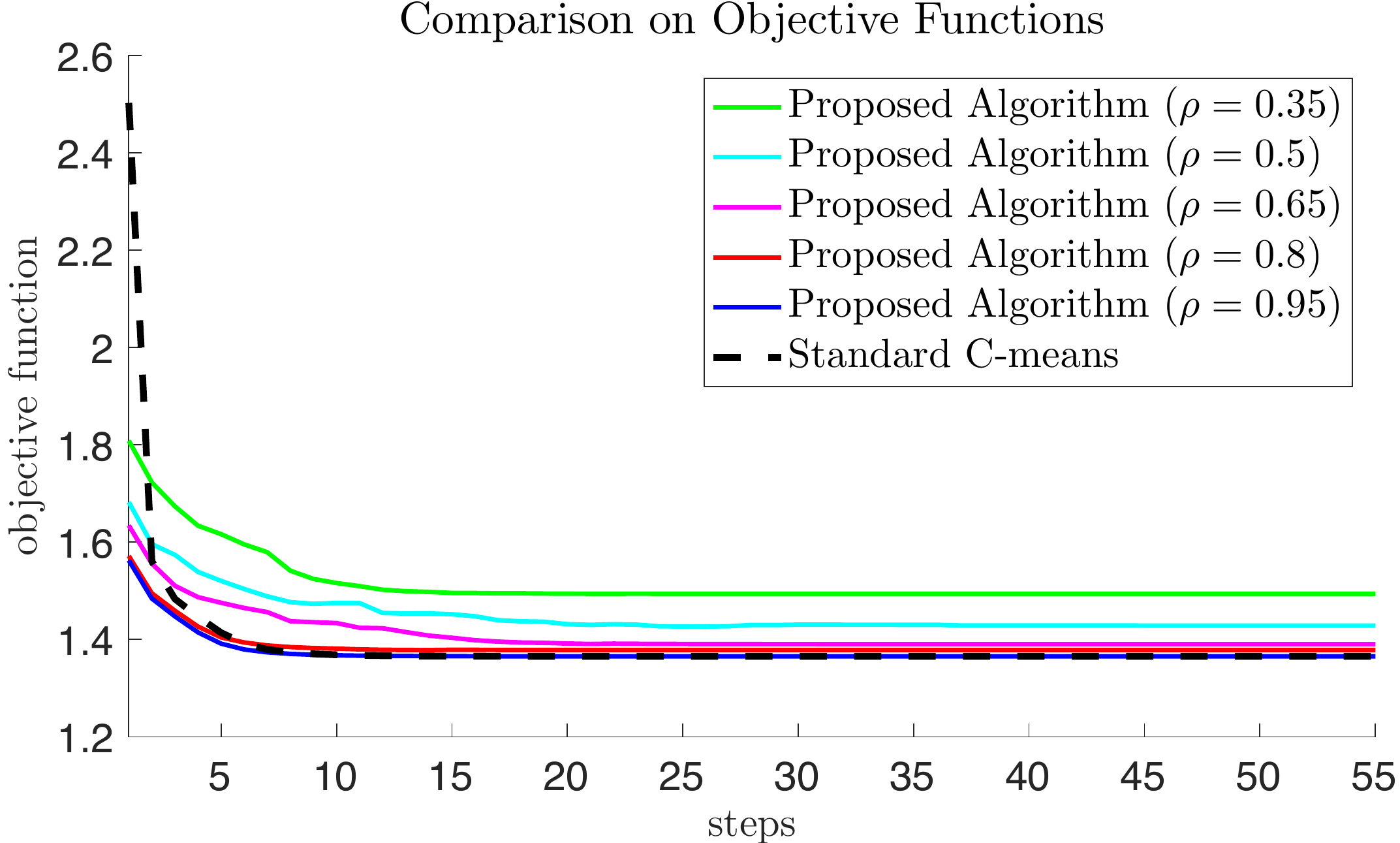} 

\caption{Comparison of the value of the objective function at each step of the proposed algorithm, for  different choices of $\rho$, with the one obtained by the standard C-means, considering the example of Figure \ref{fig:example1}.}
 \label{fig:example2}

\end{figure}


\bibliographystyle{IEEEtran}


\end{document}